\documentclass{article}

\usepackage[preprint,nonatbib]{neurips_2026}

\usepackage{amsmath,amsfonts,bm}

\def\eqref#1{equation~\ref{#1}}

\def\1{\bm{1}}

\DeclareMathAlphabet{\mathsfit}{\encodingdefault}{\sfdefault}{m}{sl}
\SetMathAlphabet{\mathsfit}{bold}{\encodingdefault}{\sfdefault}{bx}{n}

\DeclareMathOperator*{\argmin}{arg\,min}

\usepackage{amsmath,amsfonts,bm}
\usepackage{dsfont}
\usepackage{caption}
\usepackage{subcaption}
\usepackage{verbatim}
\usepackage[dvipsnames]{xcolor}
\definecolor{orange}{rgb}{0.93725,0.52549,0.2117647}
\definecolor{blue}{rgb}{0.23137,0.4588,0.68627}
\usepackage{dirtytalk}
\usepackage{wrapfig}
\usepackage{color, colortbl}
\usepackage{stfloats}
\usepackage{graphicx}
\usepackage{multirow}
\usepackage{siunitx}
\usepackage{array}
\usepackage{amssymb}
\usepackage{longtable}
\usepackage{caption}
\usepackage{lscape}
\usepackage{adjustbox}
\newcommand{\thickhline}{%
    \noalign {\ifnum 0=`}\fi \hrule height 1pt
    \futurelet \reserved@a \@xhline
}
\newcolumntype{"}{@{\hskip\tabcolsep\vrule width 1pt\hskip\tabcolsep}}
\newcommand\norm[1]{\left\lVert#1\right\rVert}
\usepackage{upquote,textcomp,amsmath}

\usepackage[pagebackref=false,breaklinks=true,colorlinks,bookmarks=false]{hyperref}
\hypersetup{colorlinks,breaklinks,
            urlcolor=[rgb]{0.93725,0.52549,0.2117647},
            citecolor=[rgb]{0.23137,0.4588,0.68627},
            linkcolor=[rgb]{0.93725,0.52549,0.2117647}}
\usepackage[numbers]{natbib}
\usepackage[utf8]{inputenc}
\usepackage[T1]{fontenc}
\usepackage{hyperref}
\usepackage{url}
\usepackage{booktabs}
\usepackage{amsfonts}
\usepackage{nicefrac}
\usepackage{microtype}
\usepackage{xcolor}
\usepackage{amsthm}
\usepackage{physics}

\newtheorem{theorem}{Theorem}

\title{A Minimal Interpretable Architecture for Zero-Shot Reconstruction of Dynamical Systems}

\author{
    Christoph Jürgen Hemmer\textsuperscript{2,1,3}\footnotemark[1] ,
    Florian Plaswig\textsuperscript{1,3}\footnotemark[1] ,
    Daniel Durstewitz\textsuperscript{1,2,3}\\
    \textsuperscript{1}Dept. of Theoretical Neuroscience, Central Institute of Mental Health, Mannheim, Germany \\
    \textsuperscript{2}Interdisciplinary Center for Scientific Computing (IWR), Heidelberg, Germany \\
    \textsuperscript{3}Faculty of Physics and Astronomy, Heidelberg University, Heidelberg, Germany \\
}

\begin{document}
\renewcommand{\thefootnote}{\fnsymbol{footnote}}
\footnotetext[1]{These authors contributed equally to this work. \\ Corresponding authors: \{christoph.hemmer, daniel.durstewitz\}@zi-mannheim.de}
\renewcommand{\thefootnote}{\arabic{footnote}}

\maketitle

\begin{abstract}
Recent foundation models (FMs) for zero-shot reconstruction of dynamical systems (DS) achieve strong out-of-domain generalization but provide little insight into the \textit{mechanisms} that underlie their forecasts. Such an understanding could help to strip down overladen FM architectures to their bare essence and expose the minimal requirements for in-context learning in the DS domain. Toward this goal, here we iteratively reduce a recent powerful SOTA model for DS reconstruction, DynaMix \cite{hemmer2025true}, to a minimal interpretable two-parameter form, which we call \emph{DynaBase}. DynaBase produces forecasts through a linear blend of the current latent state and the nearest in-context neighbor and its temporal successor. Surprisingly, despite its extreme simplicity, DynaBase produces highly competitive zero-shot DS reconstructions across chaotic and cyclic systems, with a negligible parameter load, many orders of magnitude below that of other FMs. Even more, this extreme simplicity permits direct model optimization on DS reconstruction measures, as well as closed-form one-step analytical solutions on prediction MSE. Theoretical and empirical analysis of DynaBase further leads to a 1-parameter family of maps, with the context-parroting algorithm of \cite{zhang2026context} recovered at one end, and chaotic (divergent but bounded) behavior at the other. We further show how different training strategies lead to models either optimal for short-term prediction or for DS reconstruction. Thus, DynaBase not only exposes the minimal mechanisms required for producing zero-shot DS reconstruction, but also reconciles within an accessible mathematical frame divergent observations in the literature. 
\end{abstract}

\section{Introduction}

Many, if not most, natural and engineered systems, from chemical and molecular processes to climate, ecosystems, brain activity, or stock markets, are naturally described as dynamical systems (DS) \cite{durstewitz_reconstructing_2023,gilpin_chaos_2022, ecology1, tziperman-97,BhallaIyengar1999Emergent,buzsaki_rhythms_2006,mandelbrot_misbehavior_2007}. Reconstructing these DS (DSR) in a data-driven way from just time series observations is of fundamental interest to all sciences, which seek tractable models that enable mechanistic insight into the underlying dynamical processes. DSR, hence, aims for \textit{generative surrogate models} of the underlying dynamics which not only forecast a few steps ahead, but---importantly---reproduce \textit{long-term statistical} properties of the DS in question, such as its attractor geometry (ergodic distribution) and power spectrum \cite{brenner_almost_2024,durstewitz_reconstructing_2023,gilpin_model_2023,goring_domain_2024,platt2023constraining,brunton_chaos_2017,brunton_data-driven_2019,durstewitz2026position}. While traditionally DSR involved custom-training ML/AI models on each particular dataset \cite{brenner_tractable_2022,hess_generalized_2023,goring_domain_2024,platt_systematic_2022,platt2023constraining}, inspired by the success of in-context learning in large language and time-series foundation models (FMs) \cite{ansari2024chronos,ansari2025chronos2univariateuniversalforecasting,bhethanabhotla2024mamba4cast}, a new line of research explored whether zero-shot generalization to previously unseen DS is also feasible \cite{hemmer2025true, lai2025panda}.

However, as for most if not all FMs, it is currently utterly unclear how SOTA DSR FMs like DynaMix \cite{hemmer2025true} achieve their zero-shot out-of-domain generalization, and what the essential architectural requirements for this are. Insight into how AI models solve their problems is of general interest, but is particularly important in scientific and medical domains where the focus is on mechanistic understanding \cite{Rudin2019,durstewitz_reconstructing_2023}. Here we tackle this question in the context of DSR by reverse-engineering the pretrained DynaMix model. We produce a series of systematic ablations and simplifications that largely retain, or even improve on, the model's zero-shot DSR performance. We ultimately arrive at an algebraically minimal form consisting just of a linear combination of the model's previous state, the nearest neighbor from the context signal, and its immediate temporal successor. A natural \emph{symmetry} requirement on the context points further condenses the model into a two-parameter form. We show that this recursive map, \emph{DynaBase}, can generate point, cyclic, and chaotic attractors, and allows for closed-form one-step least-squares fitting to time series data. Theoretical and empirical analysis of DynaBase trained directly for DSR reveals that optimal zero-shot solutions essentially lie along a 1-parameter curve with truly chaotic dynamics at one end and `context parroting' of a time series snippet at the other, a strategy recently observed in transformer-based time series FMs like Chronos \cite{zhang2026context,zhang2025zeroshotforecastingchaoticsystems}. Our observations are in line with a recent literature that demonstrates that tiny ML/AI models, many orders of magnitude below the parameter load of modern FMs, often achieve comparable, or even widely superior performance, on a range of tasks from cognitive and behavioral modeling \cite{li2025discovering} to challenging reasoning problems \cite{wang2025hierarchicalreasoningmodel,jolicoeurmartineau2025morerecursivereasoningtiny}.

In a nutshell, our main results are:
\begin{itemize}
    \item \textbf{Minimal zero-shot DSR model:} Through iterative simplification of DynaMix \cite{hemmer2025true} we create a minimal 2-parameter map, DynaBase, which exhibits competitive zero-shot DSR performance at essentially zero training costs.
    \item \textbf{Mechanistic insight:} The model's two coefficients $(\alpha,\beta)$ parametrize a continuous spectrum of dynamical behaviors, with optimal DSR solutions essentially driven to a 1-parameter subspace. We show that at one end of this subspace DynaBase produces strictly chaotic dynamics, mimicking chaotic DS provided in-context, while at the other it yields the exact context-parroting algorithm of \cite{zhang2026context}.
    \item \textbf{Impact of training strategy:} We further demonstrate that training using either a standard ahead-prediction MSE loss vs. a specific DSR loss yields fundamentally different solutions: Training for ahead-prediction often collapses to simple context parroting and hence does not recover the true dynamics, especially for chaotic systems, which is only achieved by DSR-specific training.
\end{itemize}

\section{Related work}\label{sec:related}

\paragraph{Dynamical systems reconstruction (DSR)}
DSR aims to learn generative models from time series data that capture the underlying system's dynamical long-term behavior, i.e. its long-term or ``climate'' statistics and state space geometry \cite{platt2023constraining,durstewitz_reconstructing_2023,hess_generalized_2023,mikhaeil_difficulty_2022,gilpin_model_2023,brunton_chaos_2017}. Custom-trained models for this purpose build on library-based methods like SINDy \cite{brunton_discovering_2016,loiseau_constrained_2018}, reservoir computing \cite{pathak_using_2017, platt_systematic_2022, platt2023constraining,carroll_network_2019,han_tighter_2022}, neural ODEs \cite{chen_neural_2018, alvarez_dynode_2020, ko_homotopy-based_2023}, Koopman operators \cite{lusch_deep_2018, otto_linearly-recurrent_2019, azencot_forecasting_2020, naiman_koopman_2021, brunton_modern_2021, wang_koopman_2022}, or on various types of RNNs \cite{trischler_synthesis_2016,durstewitz_state_2017,vlachas_data-driven_2018,brenner_tractable_2022,rusch_long_2022,hess_generalized_2023,hemmer_optimal_2024,brenner_almost_2024,brandle2026continuous}. More important than the particular architecture appears to be the training strategy: sparse or generalized teacher forcing \cite{mikhaeil_difficulty_2022,hess_generalized_2023} are control-theoretic training techniques that enable long trajectory roll-outs even for chaotic DS, while specific regularization methods based on a system's invariant measures have been designed to enforce agreement in long-term statistics \cite{Fumagalli,platt2023constraining,jiang2023training}. All of these methods, however, require purpose-training on each new system, and out-of-domain generalization remains a key challenge \cite{goring_domain_2024}.

\paragraph{Zero-shot DSR and time-series foundation models (FMs)}
Multi-environment \cite{yin2021leads,kirchmeyer2022generalizing}, hyper-network-guided \cite{vermani2025metadynamical}, or hierarchical \cite{brenner2024learning} approaches have been used to train DSR models on many DS simultaneously, but all of these still require fine-tuning on new systems or parameter configurations not seen in training. More recently, inspired by the success of in-context inference in LLMs \cite{brown2020language,garg2022can,akyurek2023what}, a line of general-purpose time series (TS) FMs pretrained on large corpora of real and synthetic data has emerged, which forecast TS from a short observed context snippet zero-shot. Most of these, like Chronos \cite{ansari2024chronos,ansari2025chronos2univariateuniversalforecasting}, TimesFM \cite{das2024decoder}, or Panda \cite{lai2025panda}, are based on transformers, with a few exceptions that engage RNNs like the xLSTM \cite{beck2024xlstm} or Mamba \cite{gu2024mamba} as their core. All of these, however, lack the ability to reproduce a system's long-term behavior as required in DSR \cite{goring_domain_2024,hemmer2025true,durstewitz2026position}. They mostly either converge to just fixed points in the limit, or cyclically repeat the context \cite{zhang2025zeroshotforecastingchaoticsystems,zhang2026context}, and thus fail to capture DS with inherently chaotic dynamics \cite{hemmer2025true}. DynaMix \cite{hemmer2025true} closes this gap by pretraining, based on DSR-specific training techniques \cite{mikhaeil_difficulty_2022}, a mixture of AL-RNN \cite{brenner_almost_2024} experts, weighted by a context-dependent gating network. It delivers competitive short-term forecasts and SOTA long-term performance, while being relatively lightweight with $\sim 10^4$ parameters.

\paragraph{Mechanisms of in-context learning}
What is at the root of in-context learning has been intensely studied in the past few years \cite{brown2020language,garg2022can,akyurek2023what}, and hypotheses range from transformers implementing gradient-descent in-context \cite{Bai2023neurips,vonoswald2023transformers,lin2024dual,xie2021explanation} to mechanisms that more or less boil down to a simple remixing of their memory contents \cite{shen24d,wang2025can,wu25iclasassocmem}. For DSR FMs, the analogous problem remains largely open but---at the same time---appears more tractable due to the more constrained and mathematically better defined \cite{goring_domain_2024} nature of the question. Recent studies using Chronos suggested that TS FMs may long-term forecast DS by the simple mechanism of context parroting \cite{zhang2025zeroshotforecastingchaoticsystems}, a well known pattern in LLMs \cite{olsson2022context,edelman2024evolution,chen2024unveiling}. Based on this phenomenon, \citet{zhang2026context} explicitly design a `context parroting algorithm' which takes the last $D$ values of the context as a delay-embedded query, retrieves the best-matching motif elsewhere in the context, and copies the subsequent fragment as forecast. They show that this strategy works surprisingly well and actually outperforms most TS FMs. But context parroting, as exhibited by Chronos or conceptualized in this form, can only reproduce \textit{discrete-cyclic} activity and thus \textit{inherently} fails to capture defining signatures of chaotic systems, like positive Lyapunov exponents and rather broad power spectra \cite{hemmer2025true, durstewitz2026position}. A superficially similar principle is employed in classical nonlinear forecasting algorithms that retrieve from a library (history) locally neighboring trajectories whose next time step predictions are either just averaged or based on which a locally linear model is fit \cite{farmer_predicting_1987,kantz_nonlinear_2004}. Crucially, in contrast to mere context parroting, this neighborhood may change, however, with each prediction step taken. Through iterative reduction of DynaMix we arrive at a mathematical form that contains these classes of algorithms as special cases.

\section{Methods}\label{sec:methods}

\subsection{Background: DynaMix}
DynaMix \cite{hemmer2025true} is a DSR foundation model based on a mixture-of-experts architecture, using $j=1 \dots J$ almost-linear RNNs (AL-RNNs) \cite{brenner_almost_2024} as experts, 
\begin{equation} \label{eq:alrnn}
    \bm{z}^j_{t+1}= F^\text{exp}_j(\bm{z}_t)=\bm{A}_j \bm{z}_t + \bm{W}_j \Phi^*(\bm{z}_t)+\bm{h}_j\;,
\end{equation}
where $\bm{z}^j_t\in\mathbb{R}^M$ is an $M$-dimensional latent state, $\Phi^*(\bm{z}_{t}) := \left[ \bm{z}_{1:M-P,t},\ \mathrm{ReLU}(\bm{z}_{M-P+1:M,t})\right]^\top$ is the activation function with $M-P$ linear units and $P$ ReLUs, and $\bm{A}_j\in \mathrm{diag}(\mathbb{R}^M)$, $\bm{W}_j\in\mathbb{R}^{M\times M}$, and $\bm{h}_j\in\mathbb{R}^{M}$ are learnable parameters. The different experts are combined into an overall next state prediction by $\bm{z}_{t+1}=\sum_{j=1}^J w^{exp}_{j,t}\cdot\bm{z}_{t+1}^j$, where the weights $w^{exp}_{j,t} \in (0,1)$ are determined by a gating network through 
\begin{equation}\label{eq:exp_weights}
    \bm{w}^{exp}_t(\bm{z}_t,\tilde{\bm{C}}\bm{w}^{att}_t)=\sigma\left(\frac{\text{MLP}(\tilde{\bm{C}}{\bm{w}^{att}_t},\bm{z}_t)}{\tau_{\text{exp}}}\right)\in\mathbb{R}^J\;,
\end{equation}
where $\tau_{\text{exp}}$ is a temperature parameter and $\tilde{\bm{C}}=\text{CNN}(\bm{C})$ are temporal features obtained from the provided context signal $\bm{C}=\{\bm{c}_t\}\in\mathbb{R}^{N\times T_C}$ through a CNN. Attention weights $\bm{w}^{att}_t$ at each time step $t$ are in turn computed based on a distance between current latent state $\bm{z}_{t}$ and the context $\bm{C}$ as
\begin{equation}\label{eq:attn_weights}
    \bm{w}^{att}_t(\bm{z}_t,\bm{C}) = \sigma\left( 
        \frac{\left| \bm{C} - \left( \bm{Dz}_t + \bm{\epsilon} \right)\bm{1}_{T_C}^\top \right|^\top\bm{1}_N}{\tau_{\text{att}}}
    \right) \in \mathbb{R}^{T_C}\;,
\end{equation}
where $\tau_{\text{att}}$ is another temperature, $\bm{D}\in\mathbb{R}^{N\times M}$ a learnable matrix, $\bm{\epsilon}\sim\mathcal{N}(0,\bm\Sigma)$ exploration noise, and $\bm{1}_{\{T_C,N\}}$ are column vectors of ones of length $T_C, N$, respectively. In \cite{hemmer2025true}, DynaMix was trained on $\approx 6 \times 10^5$ trajectories from $34$ $3d$ DS in the cyclic or chaotic regime using control-theoretic training techniques that ensure DSR criteria are met \cite{mikhaeil_difficulty_2022,hess_generalized_2023}. After pretraining, given any context time series snippet $\bm{C}$, DynaMix is charged with forecasting the long-term trajectory evolution of the underlying DS without any retraining or fine-tuning.

\subsection{Reducing DynaMix}\label{sec:ablations}
At a high level, DynaMix can be compactly expressed as a single RNN of the following form 
\begin{equation}
    \bm{z}_{t+1} = F_\text{DynaMix}(\bm{z}_t, \bm{C}) := \sum_j F_j^\text{exp}(\bm{z}_t)\cdot \bm{w}^{exp}_t(\bm{z}_t,\tilde{\bm{C}}\bm{w}^{att}_t)\;,
    \label{eq:update-rule-dynamix}
\end{equation}
with (see \cite{hemmer2025true})
\begin{equation}
    \tilde{\bm{C}}{\bm{w}^{att}_t} = \sum_{i=1}^{T_C-1} \text{CNN}(\bm{C})_i\cdot \bm{w}^\text{att}_i(\bm{z}_t,\bm{C})\;. \label{eq:attention-weights-dynamix}    
\end{equation}
Note that eq. \ref{eq:update-rule-dynamix} is a weighted sum of \textit{piecewise-linear} systems $F_j^\text{exp}$, where, however, the weights $\bm{w}^{exp}_t(\bm{z}_t,\bm{C})$ themselves are nonlinear functions of the current state and the context. As already shown in \cite{hemmer2025true}, the MLP in eq. \ref{eq:exp_weights} may be replaced by a linear layer w/o significant loss in performance, while in Appx. \ref{appx:ablation} we show the output nonlinearity $\sigma$ (softmax) can as well be linearized over the relevant range. Further, the almost-linear term $\Phi^*(\bm{z}_t)$ in eq. \ref{eq:alrnn} can be made fully linear (Appx. \ref{appx:ablation}), leaving $\tilde{\bm{C}}=\text{CNN}(\bm{C})$ and $\bm{w}^{att}_t(\bm{z}_t, \bm{C})$ as the only nonlinearities:
\begin{equation}\label{eq:dynamix_linear}
    \bm{z}_{t+1}=F_\text{Linear}(\bm{z}_t,\bm{C})=\bm{A}\bm{z}_t+\bm{B}(\text{CNN}(\bm{C}){\bm{w}^{att}_t}(\bm{z}_t, \bm{C}))+ \bm{h}
\end{equation}
We next observe that for $\tau_\text{att} \to 0$, $\bm{w}^{att}_t(\bm{z}_t, \bm{C})$ essentially just boils down to a nearest neighbor selector, $\tau(\bm{z}_t)=\argmin_{s}\norm{\bm{c}_s-\bm{z}_t}_2$. Further assuming a CNN kernel size of $2$ and noticing that for a single linear CNN layer (as used in DynaMix) the CNN operations can be combined with $\bm{B}$ in eq. \ref{eq:dynamix_linear}, we obtain 
\begin{align}
    f_\text{Linear}(\bm{z}_t,\bm{C})&=\bm{A}z_t+\bm{B}^{(1)}\bm{c}_{\tau(\bm{z}_t)}+\bm{B}^{(2)} \bm{c}_{\tau(\bm{z}_t)+1}+\tilde{\bm{h}} \label{eq:linear-general} \\
    \tau(\bm{z}_t)&=\argmin_{s}\norm{\bm{c}_s-\bm{z}_t}_2 \label{eq:tau}
\end{align}
where $\bm{A},\bm{B}^{(1)},\bm{B}^{(2)} \in \mathbb{R}^{M\times M}$ and $\tilde{\bm{h}}\in\mathbb{R}^M$. We call this form a \textit{recursive affine nearest-neighbor (NN) map}.

\subsection{DynaBase}\label{sec:dynabase}
In DSR, two vector fields that differ only by a rigid coordinate transformation describe the same underlying flow and therefore yield dynamically equivalent reconstructions \cite{perko_differential_2001}. Recognizing this invariance, we can state the following result:
\begin{theorem}[Uniqueness of recursive affine NN maps]\label{thm:uniqueness}
Any nearest-neighbor recursive map of the form eq. \ref{eq:linear-general} that is affine in $\bm{z}_t$, $\bm{c}_{1}$, and $\bm{c}_{2}$, equivariant under $O(N)$ rotations and reflections, and translation equivariant, belongs to the two-parameter linear family $f_{\alpha,\beta}$,
\begin{equation}
    \bm{z}_{t+1} = f_{\alpha,\beta}(\bm{z}_t) = \alpha\, \bm{z}_t + \beta\, \bm{c}_{1} + (1-\alpha-\beta)\, \bm{c}_{2}.
\end{equation}
\end{theorem}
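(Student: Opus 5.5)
We argue by applying the two symmetry requirements in turn to the general affine form eq.~\ref{eq:linear-general}, writing $\bm{c}_1:=\bm{c}_{\tau(\bm{z}_t)}$ and $\bm{c}_2:=\bm{c}_{\tau(\bm{z}_t)+1}$ and taking the latent and observation dimensions equal, $M=N$. The first step is to check that the nearest-neighbor selector is compatible with both symmetries, so that it does not interfere with the argument. If every context point and the state are transformed simultaneously by $\bm{x}\mapsto \bm{Q}\bm{x}+\bm{v}$ with $\bm{Q}\in O(N)$, then all Euclidean distances $\norm{\bm{c}_s-\bm{z}_t}_2$ are unchanged, so $\tau(\bm{z}_t)$ is unchanged and $(\bm{c}_1,\bm{c}_2)\mapsto(\bm{Q}\bm{c}_1+\bm{v},\bm{Q}\bm{c}_2+\bm{v})$. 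Equivariance of the map can therefore be imposed directly on the affine function $g(\bm{z},\bm{c}_1,\bm{c}_2)=\bm{A}\bm{z}+\bm{B}^{(1)}\bm{c}_1+\bm{B}^{(2)}\bm{c}_2+\tilde{\bm{h}}$, and required to hold for all arguments. Here we assume the context is rich enough that the triples $(\bm{z},\bm{c}_1,\bm{c}_2)$ actually realized span $\mathbb{R}^{3N}$ affinely, so that equality of affine maps on realized triples implies identical coefficients.

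\textbf{Rotation and reflection equivariance.} We require $g(\bm{Q}\bm{z},\bm{Q}\bm{c}_1,\bm{Q}\bm{c}_2)=\bm{Q}\,g(\bm{z},\bm{c}_1,\bm{c}_2)$ for all $\bm{Q}\in O(N)$. Setting all arguments to zero gives $\tilde{\bm{h}}=\bm{Q}\tilde{\bm{h}}$ for every $\bm{Q}$, and taking $\bm{Q}=-\bm{I}$ yields $\tilde{\bm{h}}=0$. Comparing coefficients of each argument then gives $\bm{A}\bm{Q}=\bm{Q}\bm{A}$, and likewise for $\bm{B}^{(1)},\bm{B}^{(2)}$. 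This is the step needing a genuine lemma: the commutant of $O(N)$ acting on $\mathbb{R}^N$ consists only of scalar multiples of the identity. For $N\ge 2$ this follows from Schur's lemma, since the defining representation is absolutely irreducible. A hands-on argument is also available. Commuting with the reflections $\mathrm{diag}(\pm1)$ forces the matrix to be diagonal, and commuting with the permutation matrices, which are orthogonal, forces the diagonal entries to be equal. This concrete argument also covers $N=1$, where $O(1)=\{\pm1\}$ and every scalar commutes, so the conclusion is trivial. Hence $\bm{A}=\alpha\bm{I}$, $\bm{B}^{(1)}=\beta\bm{I}$, and $\bm{B}^{(2)}=\gamma\bm{I}$.

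\textbf{Translation equivariance.} We require $g(\bm{z}+\bm{v},\bm{c}_1+\bm{v},\bm{c}_2+\bm{v})=g(\bm{z},\bm{c}_1,\bm{c}_2)+\bm{v}$ for all $\bm{v}$. Substituting the scalar form gives $(\alpha+\beta+\gamma)\bm{v}=\bm{v}$, so $\gamma=1-\alpha-\beta$, which yields $f_{\alpha,\beta}$. Conversely, every $f_{\alpha,\beta}$ is affine, commutes with $O(N)$, and its coefficients sum to one, so each member of the family satisfies all the hypotheses; this shows the family is exactly characterized.

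The main obstacle is not algebraic. It lies in making rigorous the passage from "equivariant along realized trajectories" to "equivariant as an affine function of $(\bm{z},\bm{c}_1,\bm{c}_2)$", since $\tau$ ties $\bm{c}_1,\bm{c}_2$ to $\bm{z}$. I would handle this by stating explicitly that equivariance is demanded for all contexts and states, so the arguments range over an affinely spanning set. The remaining substantive ingredient is the commutant lemma, which the reflection-plus-permutation argument settles.
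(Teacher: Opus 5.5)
Your proof is correct and follows the same route as the paper: $O(N)$-equivariance forces $\bm{A},\bm{B}^{(1)},\bm{B}^{(2)}$ to be scalar multiples of the identity and $\tilde{\bm{h}}=\bm{0}$, and translation equivariance then gives $\alpha+\beta+\gamma=1$. Your refinements tighten steps the paper passes over quickly: the explicit spanning assumption justifies coefficient matching even though $\tau$ couples $\bm{c}_1,\bm{c}_2$ to $\bm{z}$, and the reflection-plus-permutation commutant argument avoids the real-field subtlety in the paper's bare appeal to Schur's lemma (irreducibility alone does not force a scalar commutant over $\mathbb{R}$) while also covering $N=1$.
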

\begin{proof}See Appx. \ref{sec:proofs}.\end{proof}
Exploiting Thm. \ref{thm:uniqueness}, we obtain a minimal model, which we call \textit{DynaBase}, as 
\begin{equation}
    \bm{z}_{t+1} = f_{\alpha,\beta}(\bm{z}_t) = \alpha\, \bm{z}_t + \beta\, \bm{c}_{\tau(\bm{z}_t)} + \gamma\, \bm{c}_{\tau(\bm{z}_t)+1},\qquad \tau(\bm{z}_t)=\arg\min_{s}\norm{\bm{c}_s-\bm{z}_t}_2, \label{eq:dynabase}
\end{equation}
with $\alpha,\beta,\gamma\in\mathbb R$ and $\alpha+\beta+\gamma=1$. Thus, $\gamma=1-\alpha-\beta$ leaves $(\alpha,\beta)\in\mathbb{R}^2$ as the only free parameters of the minimal model (formalized in Thm. \ref{thm:uniqueness}). If the data were noise-free, a natural \emph{self-consistency} condition would be given by 
\begin{equation}
    \bm{c}_{i+1} = f_{\alpha,\beta}(\bm{c}_i)\;\forall i \in [1,T_C-1] \ ,
\end{equation}
which guarantees that whenever $\bm{z}_t$ coincides with a context point $\bm{c}_i$, update eq. \ref{eq:dynabase} maps $\bm{z}_t\mapsto \bm{c}_{i+1}$ exactly. Using this reasonable assumption, the affine NN map could be simplified further by imposing $\alpha = -\beta \implies \gamma=1,$ such that the whole construction depends only on a single parameter:
\begin{equation}
    \bm{z}_{t+1}=f_\alpha(\bm{z}_t)= \alpha\,(\bm{z}_t - \bm{c}_{\tau(\bm{z}_t)})+\bm{c}_{\tau(\bm{z}_t)+1} \ , \label{eq:self-consistent-one-param}
\end{equation}
where the local rate of con- or divergence is determined solely by $\alpha$. 

\subsection{Training}\label{sec:training}
For training DynaBase, we assume we are given $N$-dimensional training data $\bm{X}\in\mathbb{R}^{N\times T}$, of which we define the first $T_C$ time steps as the context signal $\bm{C}\in\mathbb{R}^{N\times T_C}$. Given the model's simplicity, it allows for straightforward training, where we assume $\bm{z}_t=\bm{x}_t$ (i.e., the dynamics is directly defined in the observation space). Specifically, we explore the two following approaches:

\paragraph{Linear regression} 
As DynaBase is linear in parameters, its one-step-ahead prediction error loss is convex and we can optimize it by least-squares regression in one step using the reparameterized form 
\begin{equation}
    \bm{z}_{t+1} - \bm{c}_{\tau(\bm{z}_t)+1} = \alpha\,(\bm{z}_t - \bm{c}_{\tau(\bm{z}_t)+1}) + \beta\,(\bm{c}_{\tau(\bm{z}_t)}-\bm{c}_{\tau(\bm{z}_t)+1}). \label{eq:linear-regression}
\end{equation}
Given a training sequence, nearest neighbors $(\bm{c}_{\tau(\bm{x}_t)},\bm{c}_{\tau(\bm{x}_t)+1})\;\forall\bm{x}_i\in\bm{X}$ are simply precomputed based on eq. \ref{eq:tau}, and ordinary-least-squares is used to solve for $(\hat\alpha,\hat\beta)$ in closed form. No backpropagation or teacher forcing is required; fitting is embarrassingly cheap compared to conventional FM pretraining.

\paragraph{Grid search}
However, linear regression only optimizes for short term forecasts, and hence might not be optimal for DSR problems which require capturing long-term behavior. We therefore investigate a second strategy, where the loss is given through a long-term reconstruction measure \cite{koppe_identifying_2019,mikhaeil_difficulty_2022}
\begin{equation} \label{eq:KL_loss}
    \text{Loss}(\bm{X},\bm{\hat{X}})= D_{\mathrm{stsp}}(p_{\mathrm{true}}(\bm{X})\|p_{\mathrm{pred}}(\bm{\hat{X}})) \ ,\;
\end{equation}
i.e. the Kullback-Leibler divergence between the true and model-generated trajectory distributions based on long term rollouts $\bm{\hat{X}}$ of DynaBase. $D_{\mathrm{stsp}}$ is a common measure in the DSR literature to assess the agreement in true and reconstructed attractor geometries \cite{koppe_identifying_2019, mikhaeil_difficulty_2022, gilpin_model_2023,hess_generalized_2023,pals2024inferring, zhang2025zeroshotforecastingchaoticsystems,lai2025panda}. Since the model has only 2 trainable parameters, searching for an optimal solution by grid search is fairly straightforward and inexpensive.

\section{Results}\label{sec:results}

\subsection{Theoretical results}\label{sec:results-theory}
We now establish a theoretical characterization of the forecasting properties of DynaBase.
\paragraph{Structural characterization}
By Thm. \ref{thm:uniqueness}, DynaBase is a unique nearest-neighbor affine family compatible with the symmetry assumptions in Sec. \ref{sec:dynabase}. We use this structural result as the starting point for the two main mechanism-level consequences below.
\begin{theorem}[Reduction to context parroting]\label{thm:parroting}
Let $\bm{C}_{1:T_C}\in \mathbb{R}^{1\times T_C}$ be a context sequence and $\overline{\bm{C}}_{1:T_C-D+1}\in \mathbb{R}^{D\times (T_C-D+1)}$ its delay embedding with dimension $D$, defined as $\overline{\bm{c}}_s=(c_s,c_{s+1},\dots,c_{s+D-1})$. Let $f_0$ denote DynaBase with $\alpha=0$ (hence $\beta=0,\gamma=1$) on the embedded context $\overline{\bm{C}}$, and consider the discrete-time system $\bm{z}_{t+1}=f_0(\bm{z}_t)$ with initial condition $\bm{z}_0=\overline{\bm{c}}_{T_C-D+1}$ and $\tau(\bm{z}_0)\in[1,T_C-2D]$. Then the projection $x_t=z_{D,t}$ exactly reproduces the context-parroting algorithm of~\cite{zhang2026context}.
\end{theorem}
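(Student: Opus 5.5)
With $\alpha=0$ the map reduces to $f_0(\bm{z}_t)=\overline{\bm{c}}_{\tau(\bm{z}_t)+1}$. So every iterate after the first is itself an embedded context point, and the dynamics reduce to a map on indices. The plan has three steps: spell out the parroting algorithm of \cite{zhang2026context}, compute the first step of $f_0$, and then show by induction that later steps simply walk forward along the context.

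\textbf{Step 1: the reference algorithm.} As described in Sec.~\ref{sec:related}, the algorithm of \cite{zhang2026context} takes the last $D$ context values as a query $\overline{\bm{c}}_{T_C-D+1}$. It finds the nearest earlier motif,
\[
s^*=\arg\min_{s}\norm{\overline{\bm{c}}_s-\overline{\bm{c}}_{T_C-D+1}}_2 ,
\]
with the search range chosen so that $s^*$ is not the query itself. It then outputs the continuation $x_t=c_{s^*+D-1+t}$ for $t=1,2,\dots$.

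\textbf{Step 2: the first step of $f_0$.} Since $\bm{z}_0=\overline{\bm{c}}_{T_C-D+1}$, we have $\tau(\bm{z}_0)=s^*$. The hypothesis $\tau(\bm{z}_0)\in[1,T_C-2D]$ rules out the trivial self-match and encodes the same admissibility restriction as the reference algorithm. Hence $\bm{z}_1=\overline{\bm{c}}_{s^*+1}$, and its last coordinate is $z_{D,1}=c_{s^*+D}=x_1$.

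\textbf{Step 3: induction.} The key observation is that once $\bm{z}_t=\overline{\bm{c}}_k$ lies exactly on the embedded context, its nearest neighbour is itself, so $\tau(\bm{z}_t)=k$ (distance zero). Therefore $\bm{z}_{t+1}=\overline{\bm{c}}_{k+1}$. By induction, $\bm{z}_t=\overline{\bm{c}}_{s^*+t}$ and
\[
x_t=z_{D,t}=c_{s^*+D-1+t},
\]
which is exactly the copied fragment. The induction applies as long as $s^*+t+1\le T_C-D+1$, i.e.\ as long as $\overline{\bm{c}}_{s^*+t+1}$ exists. The bound $s^*\le T_C-2D$ guarantees at least $D$ steps of copying before the walk reaches the query window. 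From there on, the trajectory simply re-enters $\overline{\bm{c}}_{T_C-D+1}=\bm{z}_0$, and the orbit repeats cyclically. This periodic repetition is precisely how the parroting algorithm extends its forecast beyond the copied fragment.

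\textbf{Main obstacle.} The hard part is not the algebra but matching conventions. Three issues need care: how ties in the $\arg\min$ are broken, how $\tau$ is restricted so the query does not select itself (or so that $\tau(\bm{z})+1$ is defined at the last embedded index), and how \cite{zhang2026context} handles forecasts longer than the remaining context. I would first fix these conventions explicitly so that they coincide for both procedures, using the same tie-break rule. I would then show that at the boundary, where $\bm{z}_t=\bm{z}_0$, both procedures make the same periodic restart. This reduces equality of the two infinite output sequences to equality of the first period, which Steps 2 and 3 establish.
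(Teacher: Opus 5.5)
Your proposal is correct and follows the paper's proof essentially step for step: the hypothesis $\tau(\bm{z}_0)\in[1,T_C-2D]$ identifies the first lookup with the motif match, a zero-distance induction moves $\bm{z}_t=\overline{\bm{c}}_{s^*+t}$ forward along the embedded context, and the wrap-around is handled by the return to the query window. Your version is slightly tighter than the paper's, since you start the induction correctly at $\bm{z}_1$ and derive periodicity from the exact return $\bm{z}_P=\bm{z}_0$ with $P=T_C-D+1-s^*\ge D+1$.

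One caveat is shared with the paper. The step $\tau(\overline{\bm{c}}_k)=k$ tacitly requires that $\overline{\bm{c}}_k$ is not repeated elsewhere in the embedded context. A common tie-break rule cannot guarantee this, because parroting copies consecutively after a single lookup whereas $f_0$ re-queries at every step. What is actually needed is a genericity assumption on $\bm{C}$, namely pairwise distinct delay vectors.
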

\begin{proof}See Appx. \ref{sec:proofs}.\end{proof}
Theorem \ref{thm:parroting} shows that the context-parroting baseline of \cite{zhang2026context} is recovered for free from DynaBase by simply setting $(\alpha,\beta)=(0,0)$ and delay-embedding the context. Any DynaBase model with $\alpha\neq 0$ is therefore strictly more expressive than parroting in the sense that a larger class of DS can be embedded.

Given that the map $f_\alpha$ is piecewise-affine, we can also derive an exact continuous-time analogue \cite{monfared_transformation_2020} $\bm{\zeta}(t)$, such that $\bm{\zeta}(kh)=\bm{z}_k \forall k \in \mathbb{Z}, h \in \mathbb{R}$, by integrating the equations 
\begin{equation}
    \dv{\bm{\zeta}}{t} = \frac{\log\alpha}{h}\bm{\zeta}-\frac{1}{h}\frac{\log{\alpha}}{1-\alpha}(\bm{c}_{\tau(\zeta)+1}-\alpha \bm{c}_{\tau(\zeta)})\label{eq:continuous-limit}
\end{equation}
where we denote the discrete timestep size of $f_\alpha$ as $h$ and $\frac{\log\alpha}{1-\alpha}\rvert_{\alpha=1}$ is defined by its continuous extension. Notably, this limit \emph{does not exist} for $\alpha = 0$ (context parroting), and is only defined in the complex number space for $\alpha < 0$. For $\alpha = 0$, the map collapses onto $\bm{C}$ at every step, i.e. hops between a \textit{discrete set of points}, which is not reproducible by a continuous flow unless $\bm{C}$ contains exactly one (fixed) point. Importantly, context parroting is therefore an \textit{inherently discrete} special case of DynaBase, \textit{defined on a measure-zero set}, which cannot reproduce true limit cycles or chaotic behavior. A detailed derivation is given in Appx. \ref{sec:continuous_derivation}.

\paragraph{Dynamical characterization}
We now characterize the dynamical properties of the self-consistent DynaBase -- specifically how the parameter $\alpha$ governs the transition from trivial to genuinely chaotic behavior.
\begin{theorem}[Long-term dynamics]\label{thm:chaos}
For $\alpha=0$, every orbit of DynaBase converges after finitely many steps onto a terminal cycle contained in the context sequence, and thus has max. Lyapunov exponent $\lambda_\text{max}<0$.
\end{theorem}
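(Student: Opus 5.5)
With $\alpha=0$, eq.~\ref{eq:self-consistent-one-param} reduces to $f_0(\bm{z}_t)=\bm{c}_{\tau(\bm{z}_t)+1}$. The plan is to study the induced map on context indices instead of on $\mathbb{R}^N$. We fix the convention from eq.~\ref{eq:tau} that $\tau$ ranges over $s\in\{1,\dots,T_C-1\}$, so that the successor $\bm{c}_{\tau+1}$ always exists, and that ties in the $\arg\min$ are broken deterministically (e.g.\ smallest index). After one step, every orbit lies in the finite set $\mathcal{C}=\{\bm{c}_2,\dots,\bm{c}_{T_C}\}$, since $\bm{z}_1=\bm{c}_{\tau(\bm{z}_0)+1}\in\mathcal{C}$ regardless of $\bm{z}_0$. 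Restricted to $\mathcal{C}$, $f_0$ is a self-map of a finite set, $g(\bm{c}_i)=\bm{c}_{\tau(\bm{c}_i)+1}$, which is well defined because $\tau$ depends only on the point and not on its index.

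The first key step is a pigeonhole argument. The sequence $\bm{z}_1,\bm{z}_2,\dots$ takes values in a set of at most $T_C-1$ elements, so there exist $m<n\le m+T_C-1$ with $\bm{z}_m=\bm{z}_n$. Determinism then gives $\bm{z}_{k+p}=\bm{z}_k$ for all $k\ge m$, with $p=n-m$. Hence every orbit enters, after at most $T_C$ steps, a periodic cycle whose points are all context points. This is the ``terminal cycle contained in the context sequence.'' When the self-consistency condition holds and each $\bm{c}_i$ is its own nearest neighbour (distinct context points), $g(\bm{c}_i)=\bm{c}_{i+1}$, so the orbit walks forward through the context. At $\bm{c}_{T_C}$, which is excluded from the search range, the orbit jumps to the neighbour of $\bm{c}_{T_C}$ among earlier points. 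This makes the cycle a loop of consecutive context segments, matching the parroting picture of Thm.~\ref{thm:parroting}.

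The second key step concerns the Lyapunov exponent. The map $f_0$ is piecewise constant: on the interior of each Voronoi cell $V_s=\{\bm{z}:\tau(\bm{z})=s\}$ its Jacobian is $\alpha\bm{I}=\bm{0}$. Equivalently, perturbations satisfy $\delta\bm{z}_{t+1}=\alpha\,\delta\bm{z}_t$, so $\frac1T\log\|\delta\bm{z}_T\|=\log|\alpha|\to-\infty$. Thus $\lambda_{\max}=-\infty<0$ in the sense of the limit $\alpha\to0$ of $\lambda_{\max}(f_\alpha)=\log|\alpha|$ along cell-preserving orbits. A more concrete version: any two initial conditions in the same Voronoi cell coincide exactly after one step, so the separation collapses to zero in finite time.

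The main obstacle is rigor at the cell boundaries. The derivative is undefined on the measure-zero union of Voronoi boundaries, and points of the terminal cycle might lie on them (ties). I would handle this in two ways. First, show that context points lie strictly inside their own cells when the context points are distinct, since $\|\bm{c}_i-\bm{c}_i\|=0<\|\bm{c}_j-\bm{c}_i\|$. Second, for generic $\bm{z}_0$ off boundaries, invoke the finite-time collapse, so the exponent is computed along an orbit that stays in cell interiors from step $1$ on. If the context contains repeated points, the tie-breaking convention keeps $g$ single-valued, and the argument goes through unchanged.
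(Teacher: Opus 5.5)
Your proposal is correct and follows essentially the same route as the paper. The paper also observes that the range after one step is $\{\bm{c}_2,\dots,\bm{c}_{T_C}\}$ and obtains a terminal cycle from a self-map of a finite set; your pigeonhole argument is its out-degree-one functional-graph argument. It likewise gets $\lambda_\text{max}=-\infty$ from the vanishing Jacobian of the piecewise-constant map.

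Where the paper appeals to a saltation operator at cell crossings, your finite-time-collapse argument is the cleaner justification: a whole open cell is mapped to a single point, so nearby orbits coincide from step $1$ on. Note, though, that it is this collapse, not genericity of $\bm{z}_0$, that handles the boundaries. The later orbit points are fixed context points, and these can lie on a Voronoi boundary regardless of $\bm{z}_0$. For example, $\bm{c}_{T_C}$ is outside the search range: with context $(0,2,1)$, the terminal cycle $\{2,1\}$ passes through the tie point $1$.
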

\begin{proof}See Appx. \ref{sec:proofs}.\end{proof}
The case $\alpha=0$ is therefore dynamically trivial: all orbits eventually collapse to $\bm{C}$; context parroting is thus \emph{structurally incapable of producing chaos}.

\begin{theorem}[Boundedness]\label{thm:boundedness}
Let $f_\alpha$ be the self-consistent DynaBase map with parameter $\alpha\ge 0$ and context sequence $\bm{C}_{1:T_C}\subset\mathbb{R}^N$. Then $\alpha\in[0,1]$ is a sufficient but not a necessary condition for the orbits of $f_\alpha$ to remain bounded. More precisely, $\alpha\in[0,1]$ guarantees $d_{\bm{C}}(\bm{z}_{t+1})\le d_{\bm{C}}(\bm{z}_t)$ for all $t$. For $\alpha>1$, there exist Voronoi parcellations by $\bm{C}$ which allow for strictly chaotic dynamics (i.e., such that $\lambda_\text{max}>0$ and the dynamics is bounded).
\end{theorem}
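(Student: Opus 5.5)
The argument splits into the three claims of Thm. \ref{thm:boundedness}. Throughout, $d_{\bm{C}}(\bm{z}) := \min_s \norm{\bm{c}_s - \bm{z}}_2$ denotes the distance to the context set, and $f_\alpha(\bm{z}) = \alpha(\bm{z}-\bm{c}_{\tau(\bm{z})}) + \bm{c}_{\tau(\bm{z})+1}$ is the self-consistent map of eq. \ref{eq:self-consistent-one-param}. We adopt the convention that $\tau$ ranges over $1,\dots,T_C-1$, so that the successor $\bm{c}_{\tau+1}$ is always defined.

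\textbf{Sufficiency and monotonicity.} Fix $\bm{z}_t$ and write $s=\tau(\bm{z}_t)$. The successor $\bm{c}_{s+1}$ is itself a context point, so
\begin{equation*}
d_{\bm{C}}(\bm{z}_{t+1}) \le \norm{\bm{z}_{t+1}-\bm{c}_{s+1}}_2 = \alpha\,\norm{\bm{z}_t-\bm{c}_s}_2 = \alpha\, d_{\bm{C}}(\bm{z}_t) \le d_{\bm{C}}(\bm{z}_t)
\end{equation*}
whenever $\alpha\in[0,1]$. This is the claimed monotonicity. Boundedness follows at once: every orbit stays in the closed $d_{\bm{C}}(\bm{z}_0)$-neighbourhood of the finite set $\bm{C}$, which is compact.

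\textbf{Non-necessity and chaos for $\alpha>1$.} Both remaining claims can be settled with a single explicit example, so I will build one. On each Voronoi cell $V_s$ of $\bm{C}$ the map is affine with linear part $\alpha\bm{I}$. Consequently, on any orbit segment that avoids cell boundaries, the Jacobian product is $\alpha^t\bm{I}$, and $\lambda_\text{max}=\log\alpha>0$ holds automatically. What remains is to show that orbits stay bounded, i.e. that $f_\alpha$ maps some bounded set $K$ into itself.

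I would take $N=1$, $\alpha=2$ and a context that contains a repeated pattern, for example $\bm{C}=(0,1,0,1,\dots)$. Its Voronoi cells are $V_0=(-\infty,1/2]$ and $V_1=(1/2,\infty)$. On $V_0$ the map is $z\mapsto 2z+1$, and on $V_1$ it is $z\mapsto 2(z-1)+0=2z-2$. Restricted to $K=[-1,2]$, the first branch sends $[-1,1/2]$ onto $[-1,2]$ and the second sends $(1/2,2]$ onto $(-1,2]$. Hence $f_2(K)\subseteq K$, and $f_2$ acts on $K$ as a piecewise-expanding, doubling-like map. Orbits starting in $K$ therefore remain bounded even though $\alpha>1$, which proves that $\alpha\in[0,1]$ is not necessary. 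They also satisfy $\lambda_\text{max}=\log 2>0$, since the derivative is $2$ everywhere except on the measure-zero preimages of the boundary point $1/2$.

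To show this is genuine chaos rather than a trivial statement about the derivative, I would add one more observation. The map is a full-branch, uniformly expanding piecewise-linear map of an interval, so it is conjugate to a shift, and by the standard Lasota–Yorke theory it admits an absolutely continuous invariant measure. A higher-dimensional example follows by taking products of this one.

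\textbf{Main obstacle.} The delicate step is choosing the context so that the translation offsets $\bm{c}_{s+1}-\alpha\bm{c}_s$ exactly compensate the expansion, giving invariance $f_\alpha(K)\subseteq K$. Once such an example is found, verifying it is routine interval arithmetic. A secondary point is the self-consistency assumption in Sec. \ref{sec:dynabase}: if one insists that the context be an orbit of $f_\alpha$, note that the alternating context $\bm{c}=(0,1,0,1,\dots)$ is itself a periodic orbit of the map just constructed, since $f_2(0)=1$ and $f_2(1)=0$. So the example is self-consistent.
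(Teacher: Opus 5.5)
Your argument follows the paper's route. The sufficiency chain through the successor $\bm{c}_{s+1}$ is the same. Your example is the paper's $\bm{C}=(-1,1,-1)$ with invariant interval $[-3,3]$, rewritten under the similarity $u=(z+1)/2$, which turns the paper's $f_2$ into your map on $[-1,2]$. So the $\alpha>1$ part, with $\lambda_\text{max}=\log 2$, is fine.

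The sufficiency chain, however, fails at the end of the context under the convention you explicitly adopt, namely that $\tau$ ranges only over $1,\dots,T_C-1$. There are two ways to read $d_{\bm{C}}$, and both break:
\begin{itemize}
\item If $d_{\bm{C}}$ is the distance to all of $\bm{C}$, the equality $\norm{\bm{z}_t-\bm{c}_{\tau(\bm{z}_t)}}_2=d_{\bm{C}}(\bm{z}_t)$ fails whenever $\bm{c}_{T_C}$ is the strictly nearest context point.
\item If $d_{\bm{C}}$ is measured only to $\bm{c}_1,\dots,\bm{c}_{T_C-1}$, the first inequality fails when $\tau(\bm{z}_t)=T_C-1$, because $\bm{c}_{T_C}$ then lies outside that set.
\end{itemize}
This is not merely a technicality. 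Take $N=1$ and $\bm{C}=(0,1,100)$. Then $f_\alpha(z)=\alpha z+1$ on $z\le 1/2$ and $f_\alpha(z)=\alpha(z-1)+100$ on $z>1/2$. Starting from $z_t=99$, where $d_{\bm{C}}=1$, you get $z_{t+1}=100+98\alpha$ with $d_{\bm{C}}(z_{t+1})=98\alpha$. So monotonicity fails for every $\alpha\in(1/98,1]$. At $\alpha=1$ the right branch is $z\mapsto z+99$, so every orbit is unbounded, contradicting the sufficiency claim itself.

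The missing hypothesis is that the successor of every point $\tau$ can select must itself be selectable by $\tau$. The chain is valid in either of these cases:
\begin{itemize}
\item $\tau$ searches all of $\bm{C}$ and $\bm{c}_{T_C}$ is given a successor inside $\bm{C}$ (e.g.\ itself, or wrap-around to $\bm{c}_1$).
\item $\bm{c}_{T_C}$ coincides with an earlier context point, as in both your example and the paper's.
\end{itemize}
For $\alpha<1$ alone you can avoid any convention, since $\norm{\bm{z}_{t+1}}_2\le\alpha\norm{\bm{z}_t}_2+\max_s\norm{\bm{c}_{s+1}-\alpha\bm{c}_s}_2$ already gives boundedness. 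The case $\alpha=1$ and the monotonicity claim genuinely depend on the convention. The paper's own proof glosses over the same point: it sets $\min_{\bm{c}\in\bm{C}}\norm{\bm{c}-\bm{z}}_2=\norm{\bm{z}-\bm{c}_{\tau(\bm{z})}}_2$ while also needing $\bm{c}_{\tau(\bm{z})+1}$ to exist. You should therefore state the convention explicitly as an assumption.
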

\begin{proof}See Appx. \ref{sec:proofs}.\end{proof}
The theorem thus establishes that $\alpha\in[0,1]$ is sufficient for the orbit to remain confined to -- or track -- the context sequence, as distances to $\bm{C}$ are non-increasing. For $\alpha>1$, the map will diverge almost everywhere. If the arrangement of Voronoi cells through the context $\bm{C}$ is, however, such that its dynamics still remains bounded, it will almost surely follow a chaotic attractor \cite{alligood_chaos_1996,guckenheimer_nonlinear_1983}. An example is discussed in the proof to Thm. \ref{thm:boundedness} and illustrated in Fig. \ref{fig:alpha-histogram}b. 

\subsection{Empirical forecasting comparison}\label{sec:results-empirical}
\begin{figure*}[!ht]
    \centering
    \includegraphics[width=0.99\linewidth]{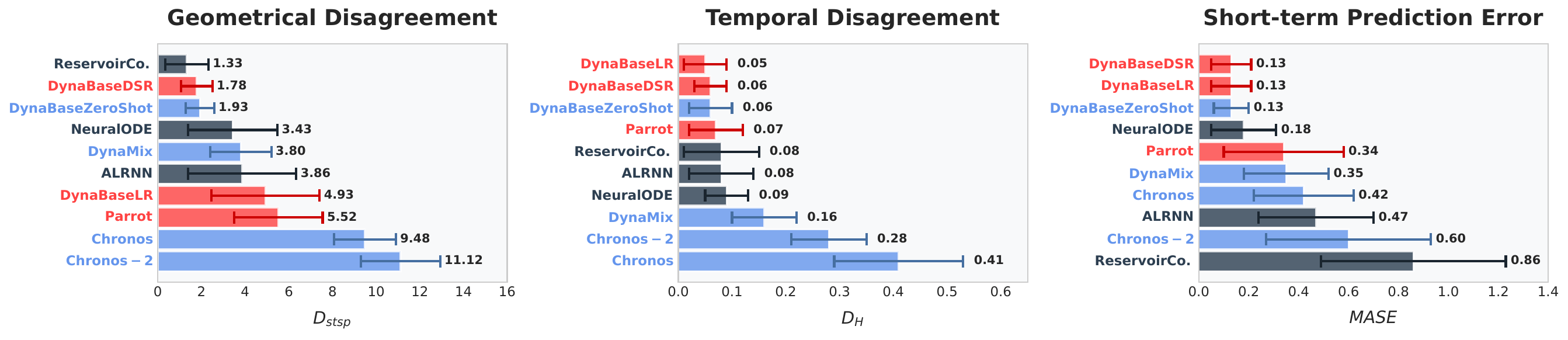}
    \caption{DSR performance across all $54$ test DS used in \cite{hemmer2025true}. Minimal models (red: DynaBase training variants and pure context parroting), custom-trained DSR models (gray), and pretrained zero-shot foundation models including a zero-shot variant of DynaBase (blue) are compared for the same context length $T_C=2000$ (values for custom-trained and foundation models taken from \cite{hemmer2025true}). Median$\pm$MAD of $D_{\mathrm{stsp}}$ (left, geometrical disagreement), $D_H$ (center, temporal disagreement) and MASE (right, $10$-step prediction error).}
    \label{fig:performance}
\end{figure*}
We evaluate DynaBase on the same DSR test set as in \cite{hemmer2025true}, reporting the same three complementary measures, two of them specific to DSR (see Appx. \ref{appx:metrics} for definitions): the disagreement $D_{\mathrm{stsp}}$ in state space geometry, the long-term temporal disagreement $D_H$, and the short-term mean absolute scaled error (MASE). To evaluate the long-term statistics $D_{\mathrm{stsp}}$ and $D_H$, each system is forecast for $T=10{,}000$ time steps, following \cite{hemmer2025true} and in agreement more generally with the DSR literature \cite{hess_generalized_2023,platt2023constraining,durstewitz2026position} (note that point forecasts become meaningless for chaotic systems beyond a few Lyapunov times). As introduced in Sec. \ref{sec:training}, we fit DynaBase by either (i) closed-form one-step linear regression, optimizing the one-step MSE in eq. \ref{eq:linear-regression}, or (ii) directly for DSR over the grid of parameter values. As baselines we include context parroting as reported in \cite{zhang2025zeroshotforecastingchaoticsystems,zhang2026context}, DynaMix \cite{hemmer2025true} and Chronos \cite{ansari2024chronos, ansari2025chronos2univariateuniversalforecasting} as FMs, and three custom-trained DSR baselines (AL-RNN \cite{brenner_almost_2024}, reservoir computers \cite{patel_using_2023}, and Neural ODEs \cite{chen_neural_2018}) trained directly on the context signal provided to the FMs.

As shown in Fig. \ref{fig:performance}, DynaBase generally performs best, with the variant optimized for DSR (ii) better than the one trained for ahead-prediction (i) on the long-term measure $D_\text{stsp}$ (while the differences in MASE and $D_H$ are not statistically significant, Wilcoxon signed-rank test, $p>0.1$). However, for longer (but not too long) prediction horizons, optimizing for ahead-prediction achieves lower MASE than DSR-optimization (Table \ref{tab:dynabse_short-term}). This difference illustrates that, not unexpectedly, the choice of \emph{training objective} determines the model's forecasting strategy: a one-step MSE penalizes any local deviations from the ground truth, and hence tends to produce smaller $\alpha$ values enabling the map to track the immediate next step tightly, yielding better short-term predictions. In contrast, the grid optimization strategy (ii) directly targets the invariant attractor geometry, outperforming (i) on DSR. In both cases, however, training DynaBase with its few parameters is extremely cheap and rapid. 

\begin{figure*}[!ht]
    \centering
    \includegraphics[width=0.99\linewidth]{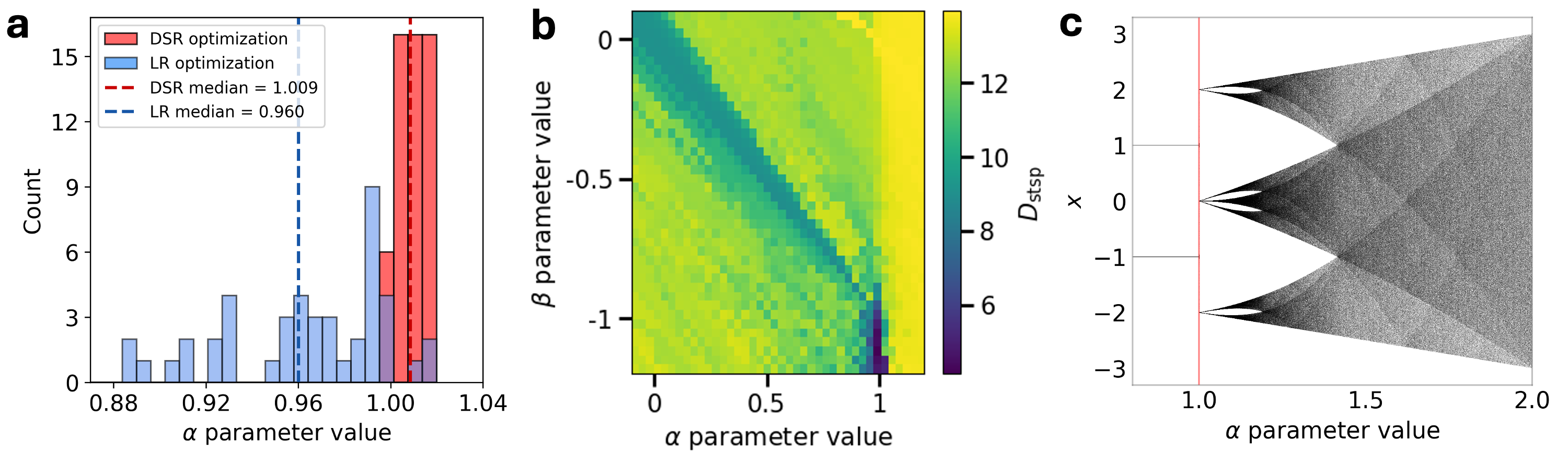}
    \caption{\textbf{a}) Comparison of distributions of estimated $\hat\alpha$ values across all $54$ chaotic test DS \cite{hemmer2025true} for the two training objectives (sect. \ref{sec:training}). Closed-form minimization of the 1-step MSE yields $\alpha<1$, outside of the chaotic regime. In contrast, optimization for DSR places most $\alpha$ values firmly (and correctly) into the chaotic regime, with a median $\hat\alpha\approx1.01$. \textbf{b}) DSR loss landscape (optimization for $D_\text{stsp}$) across $(\alpha,\beta)$ shows a valley at $\alpha=-\beta$ in agreement with the self-consistency condition, as well as a minimum at $\alpha>1$. \textbf{c}) Bifurcation diagram of the self-consistent DynaBase map $f_\alpha$ for the $1d$ context sequence $\bm{C}=(-1,1,-1)$ (the boundedness of this map was shown in the proof of Thm. \ref{thm:boundedness}). The graph shows the map's long-term behavior (attractor structure) as a function of $\alpha$, obtained by drawing $1{,}000$ random initial conditions from the interval $[-2,2]$ and discarding transients. The red line ($\alpha=1$) marks the transition from a stable $2$-cycle $(-1,+1)$ to the chaotic regime.}
    \label{fig:alpha-histogram}
\end{figure*}

The distribution of estimated $\hat\alpha$ values across the test systems in Fig. \ref{fig:alpha-histogram}\textbf{a} makes this point explicit. The two training objectives lead to qualitatively different parameter regimes: linear regression on the 1-step MSE places the map mostly into the parroting/ semi-cyclic regime with $\alpha<1$, while optimizing directly on $D_{\mathrm{stsp}}$, in contrast, produces mostly chaotic behavior (in accordance with the ground-truth DS) with $\alpha>1$. This implies that only DSR-based training has correctly inferred the chaotic ground truth systems, as further confirmed by a high correlation ($r \approx0.90$) between max. Lyapunov exponents of the GT DS and the DynaBase map when fitted by DSR, but not when fitted by least-squares ($r \approx0.18$), Fig. \ref{fig:lyapunov_histogram}. Moreover, the loss landscape reveals an elongated valley along the $\alpha=-\beta$ axis (Fig. \ref{fig:alpha-histogram}\textbf{b}), providing empirical support for the self-consistency condition in eq. \ref{eq:self-consistent-one-param}, which reduces DynaBase to a one-parameter map. The bifurcation graph in Fig. \ref{fig:alpha-histogram}\textbf{c} further illustrates on a synthetic toy example the cyclic behavior for $\alpha<1$ and emergence of chaos for $\alpha>1$. 

Another interesting observation is that for DSR-specific training, $\hat\alpha$ tightly concentrates on a single value near $\alpha\approx1.01$, across the diverse set of DS used. This suggests that, at least on this benchmark, the $\alpha$-parameter may be \emph{fixed a priori}. We test this by estimating $\alpha$ on a \textit{separate training set} (similar to the one used for training DynaMix in \cite{hemmer2025true}), obtaining $\alpha=1.006$ close to the median in Fig. \ref{fig:alpha-histogram}\textbf{a}, which leads to a \textit{parameter-free zero-shot forecaster} that requires no DS-specific fitting. Applying this to the context signals from the test set, we still achieve performance comparable to the custom-trained models (see Fig. \ref{fig:performance}, and Figs. \ref{fig:reconstruction1}-\ref{fig:reconstruction4} for zero-shot reconstructions).

\subsection{Loss landscape tracks dynamics}\label{sec:results-forecasting-strategy}

\begin{figure*}[!ht]
    \centering
    \includegraphics[width=0.99\linewidth]{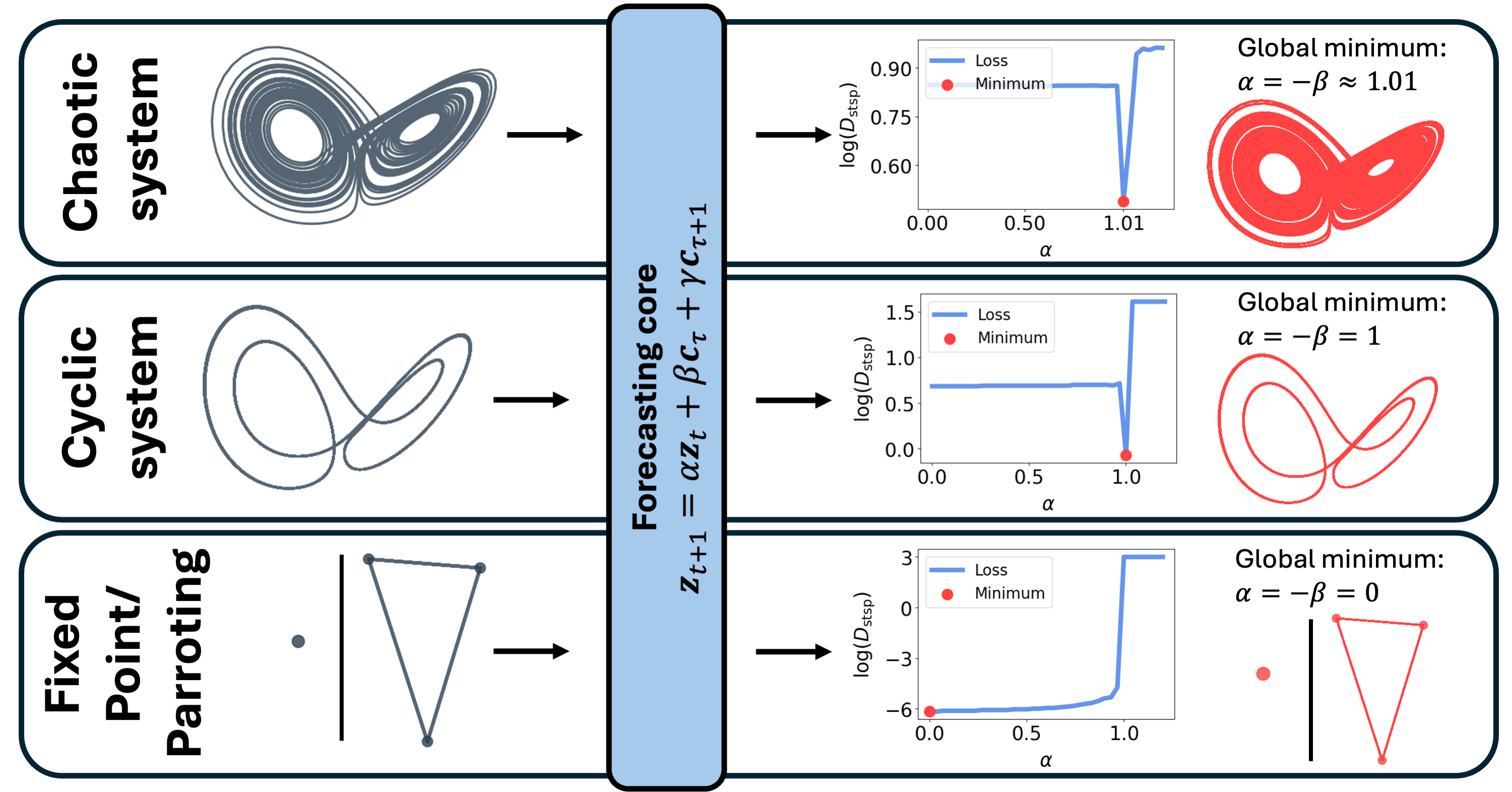}
    \caption{DynaBase' loss landscape reflects the dynamics of the target system when trained for DSR. \textbf{Top:} Lorenz-63 in chaotic regime (gray). The geometrical misalignment $D_{stsp}$ across $\alpha$ exhibits a global minimum (red dot) at $\alpha\approx1.01$, which produces a valid reconstruction of the attractor (red). \textbf{Center:} Lorenz-63 in cyclic regime. The $D_{stsp}$ minimum occurs at $\alpha=1$. \textbf{Bottom:} A fixed point or cyclic point (discrete cycle) (gray). The $D_{stsp}$ minimum occurs at $\alpha=0$, corresponding to context parroting.}
    \label{fig:strategy}
\end{figure*}

To examine how DynaBase' parameters map onto the ground truth system's dynamical regime, we study the DSR loss landscape on a representative chaotic, cyclic and fixed point system (Fig. \ref{fig:strategy}). For these different dynamical regimes, the loss landscape exhibits distinct global minima which correspond to different dynamical mechanisms:
\begin{itemize}
    \item \textbf{Context parroting} $\alpha=-\beta=0$: the map reduces to $\bm{z}_{t+1}=\bm{c}_{\tau(\bm{z}_t)+1}$, i.e.\ the state snaps onto the successor of its nearest context neighbor. For a delay-embedded context signal, this exactly reproduces the context-parroting algorithm of \citet{zhang2026context} (see Thm. \ref{thm:parroting}). 
    \item \textbf{Flow mimicry} $\alpha=-\beta=1$: the map reduces to $\bm{z}_{t+1}-\bm{z}_t = \bm{c}_{\tau(\bm{z}_t)+1}-\bm{c}_{\tau(\bm{z}_t)}$, a forward-Euler step driven by the local finite difference $\dot{\bm{c}}\approx(\bm{c}_{\tau(\bm{z}_t)+1}-\bm{c}_{\tau(\bm{z}_t)})/h$ of the underlying vector field as sampled from the context. This yields a `zero-friction' direction in state space (neither con- nor divergence), as along the limit cycle ($\lambda_\text{max}=0$) of a continuous-time DS.
    \item \textbf{Chaos} $\alpha>1$: One obtains local divergence almost everywhere and, if bounded, chaotic trajectories are produced (cf. Fig. \ref{fig:lorenz_alphas} for Lorenz-63 reconstructions along a bifurcation diagram).
\end{itemize}

\begin{figure*}[!ht]
    \centering
    \includegraphics[width=0.99\linewidth]{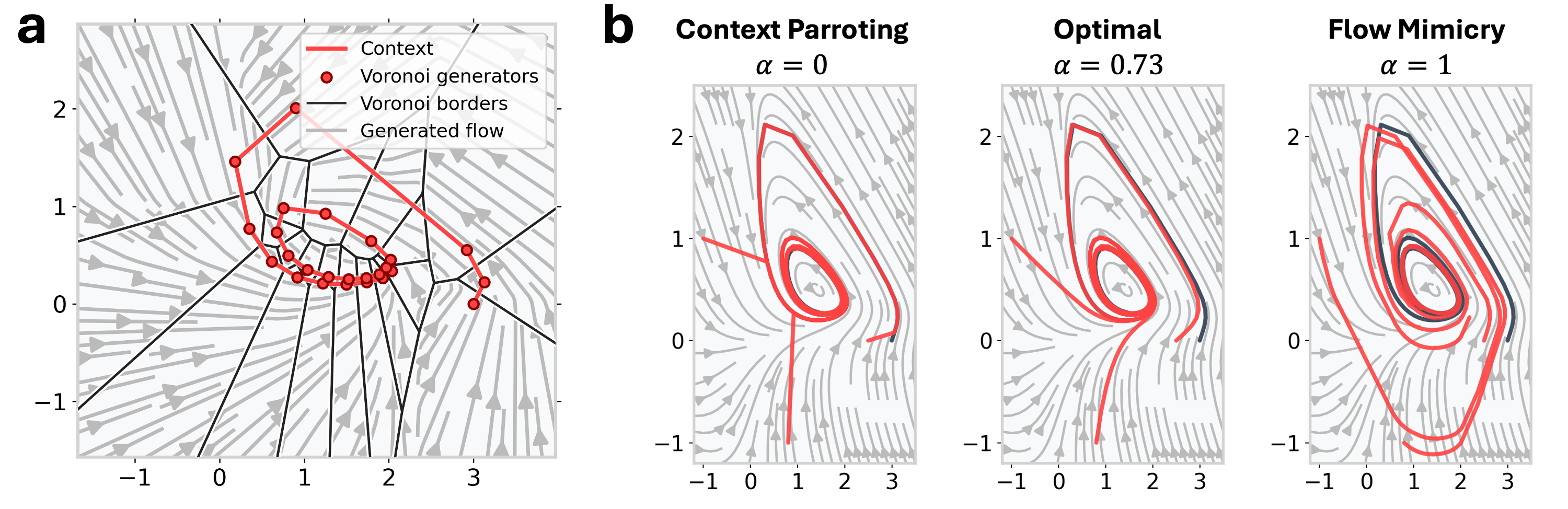}
    \caption{\textbf{a}) Voronoi tessellation for the periodic Selkov DS induced by the nearest-neighbor lookup $\tau(\cdot)$ in DynaBase. The context $\bm{C}_{1:T_C}$ partitions the state space into cells $V_s=\{\bm{z}:\tau(\bm{z})=s\}$, within which DynaBase is described by a strictly affine map $\bm{z}_{t+1}=\alpha\, \bm{z}_t + \tilde {\bm{c}}_s$. \textbf{b}) Out-of-context forecasting on the 2d Selkov system. When the initial condition lies outside of the context-covered region, neither parroting ($\alpha=0$, left) nor pure flow mimicry ($\alpha=1$, center) recovers the true vector field. Refitting $\hat\alpha$ from a single additional short trajectory via eq. \ref{eq:linear-regression} produces a local flow estimate that aligns better with the true vector field (right).}
    \label{fig:selkov}
\end{figure*}

The nearest-neighbor lookup $\tau(\cdot)$ induces a Voronoi tessellation of the state space (Fig. \ref{fig:selkov}\textbf{a}). Inside each cell $V_s=\{\bm{z}:\tau(\bm{z})=s\}$, DynaBase is a strictly affine map $\bm{z}_{t+1}=\alpha\, \bm{z}_t + \tilde {\bm{c}}_s$, where $\tilde{\bm{c}}_s=\beta \bm{c}_s+\gamma \bm{c}_{s+1}$ is fixed by the cell, hence belongs to the class of well-studied switching piecewise-linear (PL) systems \cite{daafouz_stability_2002,sun_switched_2006}. Within each cell $V_s$, the model's Jacobian is simply given by $\alpha\bm{I}$, i.e. a single scalar $\alpha$ controls the local con- or divergence almost everywhere in state space. With $\hat\alpha\approx 1.01$ just slightly above $1$ (Fig. \ref{fig:alpha-histogram}), DynaBase diverges almost everywhere but still stays sufficiently close to the cell centers to create a chaotic \textit{attractor}, despite not having a nonlinear activation function (in line with Thm. \ref{thm:chaos} \& Thm. \ref{thm:boundedness}). For context parroting with $\alpha=0$ (Thm. \ref{thm:parroting}), on the other hand, DynaBase snaps onto a cyclic point that perfectly traces out the empirically observed context trajectory. However, a cyclic point would result in this case even if the observed trajectory snippet comes from a chaotic system (Fig. \ref{fig:forecast_result_strategy}).

Pure context parroting also prevents generalization to new initial conditions of an observed DS, a feature of DynaMix demonstrated in \cite{hemmer2025true}. This is illustrated for the Selkov DS in Fig. \ref{fig:selkov}\textbf{b}: For $\alpha=0$, DynaBase strictly follows the context points and thus fails to map the vector field anywhere outside the context. On the other hand, pure context mimicry, $f_1$, while reproducing the $\lambda_\text{max}=0$ direction \textit{on} a limit cycle, fails to capture the convergence when started \textit{off} the stable cycle, since the finite difference $\bm{c}_{\tau(\bm{z}_t)+1}-\bm{c}_{\tau(\bm{z}_t)}$ is always evaluated at the nearest context point. Fitting $\hat\alpha$ via eq. \ref{eq:KL_loss} on a trajectory bit that contains a transient, however, yields a better approximation of the surrounding vector field outside the observed context while still reproducing the cycle (Fig. \ref{fig:selkov}\textbf{b}).

\section{Conclusion}\label{sec:conclusion}

We reduce DynaMix, a mixture-of-experts DSR foundation model, into \emph{DynaBase}, a two-parameter nearest-neighbor affine recurrence that can be further reduced to a $1d$ family of maps by imposing a consistency condition. Despite this minimal form, variants of DynaBase match or exceed substantially larger architectures on both long-horizon DSR measures and short-term predictions, while remaining computationally extremely cheap with either training via 1-step closed-form least squares or grid search on a DSR objective. 

Analysis of this minimal map and its training provides a number of interesting insights into the minimal requirements TS and DSR FMs may need to achieve zero-shot DS forecasts. First, $\alpha = 0$ exactly reproduces context parroting as observed in \cite{zhang2025zeroshotforecastingchaoticsystems,zhang2026context}, but this mechanism is inherently unable to reproduce chaotic or even true limit cycle behavior. Second, for $\alpha > 0$ this affine nearest-neighbor form admits the whole spectrum from convergence to fixed points or cycles to strictly chaotic activity for $\alpha > 1$, thus being sufficiently expressive to approximate a large range of DS. Third, for $\alpha \approx 1.01$ the map can even operate in zero-shot mode, at least on a common DSR benchmark set, exposing the profound role the context alone plays in structuring the forecast. Fourth, optimizing for ahead-predictions biases the map toward convergence to fixed or cyclic points, even when the underlying DS is chaotic, explaining why this training objective is suboptimal if DSR is the goal \cite{mikhaeil_difficulty_2022,hess_generalized_2023,platt2023constraining,jiang2023training}. More generally, with DynaBase we have established a formal framework that enables to study mechanisms of DSR in FMs and beyond in detail, creating an instrument for mathematical analysis of context-based forecasting and DSR methods. 

\paragraph{Limitations}
DynaBase is a minimal 2-parameter model. Although it highlights the fact that complex architectures with tens of thousands to many millions of parameters may not be needed for DSR and time series forecasting, the assumptions on which its reduction is based come at a cost in terms of the diversity of DS and vector field topologies that can be learned, as well as the accuracy with which geometrical properties of the vector field can be captured (as apparent from the remaining mismatch in Fig. \ref{fig:selkov}\textbf{b}). Also, we so far did not explicitly consider the impact of noise (but see Fig. \ref{fig:dynabase_noise}, where noise degrades but does not preclude DSR). Currently, DynaBase may be viewed more as a research tool that allows for detailed mathematical analysis of context-based DSR FMs. A more in-depth study of DynaBase, e.g. its DS approximation capabilities or the computation of the full Lyapunov spectrum, which requires the concept of a saltation operator \cite{diBernardo2008,Coombes2024OscillatoryNetworksPWL}, is still pending.

\section*{Acknowledgements}
This work was supported by individual grants Du 354/15-1 (\# 502196519) and Du 354/18-1 (\# 567025973) from the German Research Foundation (DFG), and by the German Ministry for Research, Astronautics, and Technology (BMFTR) through NAILIt (``Neuro-Inspired AI for Learning \& Inference in Non-Stationary Environments'', \# 01GQ2509A).


\bibliography{literature}
\bibliographystyle{plainnat}

\newpage
\clearpage
\appendix


\section{Appendix}

\subsection{Methodological details} \label{sec:method_details}

\paragraph{Training details}
Training of the DynaBase parameters $(\alpha, \beta)$ uses a sampled training sequence $\bm{X}_{\text{train}}\in\mathbb{R}^{N\times T}$ drawn from a DS, where the first $T_C$ steps serve as context signal $\bm{C}$. For the experiments in Fig. \ref{fig:performance}, we set $T = 2000$ and $T_C = 1000$. At evaluation time, we supplied the training sequence as context alongside the estimated parameter values $(\alpha, \beta)$, and assessed performance on a sequence unseen during training and context construction (using the self-consistent DynaBase $f_\alpha$ as defined in eq. \ref{eq:self-consistent-one-param}). This ensures that DynaBase receives exactly the same context and training sequence information as all other comparison methods.

To fit DynaBase via \textbf{linear regression}, we compute $(\hat\alpha,\hat\beta)$ in closed form via ordinary least-squares. Nearest neighbors $\tau(\cdot)$ are precomputed once via $\ell^2$-distances over $\bm{C}$ and the training sequence $\bm{X}$. To cast this as a standard linear regression problem, we define the response vector $\bm{b}$ and design matrix $\bm{A}$ based on eq. \ref{eq:dynabase}, for each training sample $i$
\begin{align}
    \bm{b}_i &= \bm{z}_{i+1} - \bm{c}_{\tau(\bm{z}_i)+1}, \\
    \bm{A}_i &= \Bigl[\;\bm{z}_i - \bm{c}_{\tau(\bm{z}_i)+1}\;,\;\bm{c}_{\tau(\bm{z}_i)} 
    - \bm{c}_{\tau(\bm{z}_i)+1}\;\Bigr].
\end{align}
The regression then takes the form $\bm{b} = \bm{A}[\alpha, \beta]^\top$, and the coefficients are obtained by the standard least-squares solution
\begin{equation}
    [\hat\alpha, \hat\beta]^\top = (\bm{A}^\top \bm{A})^{-1}\bm{A}^\top \bm{b}.
\end{equation}
In the self-consistent setting of eq. \ref{eq:self-consistent-one-param}, the constraint $\beta=-\alpha$ reduces the two-parameter fit to a single unknown scalar, simplifying $\bm{A}_i = \bm{z}_i - \bm{c}_{\tau(\bm{z}_i)}$.

To train parameters via \textbf{grid search}, we sweep across $\alpha \in [0,1.2]$ with step size $\Delta\alpha=0.002$ along the self-consistency line $\beta=-\alpha$. For each candidate $\alpha$, DynaBase is rolled out for $T=10{,}000$ steps using the context $\bm{C}$ and scored by the KL-divergence loss given in eq. \ref{eq:KL_loss} between predicted trajectories $\hat{\bm{X}}$ and the training data. The global minimizer $\hat\alpha$ is returned.

Training and evaluation of DynaBase was performed on a single CPU (AMD EPYC 9655 96-Core).

\paragraph{Comparison methods}
We compare DynaBase against three groups of baselines, all evaluated on the same $54$ test DS and the same input data of length $T=2000$ as in \cite{hemmer2025true}.

\emph{Zero-shot foundation models:} DynaMix \cite{hemmer2025true}, the DSR FM that DynaBase is reduced from (mixture-of-experts of AL-RNNs with context-dependent gating; pretrained on $\sim\!6\times 10^5$ trajectories from $34$ DS); Chronos-T5-base \cite{ansari2024chronos}, a transformer FM that quantizes real-valued series into discrete tokens and forecasts autoregressively on a corpus of synthetic and real-world time series; and Chronos-2 \cite{ansari2025chronos2univariateuniversalforecasting}, its multivariate extension augmenting the T5 backbone with a group-attention mechanism. Evaluations were taken from \cite{hemmer2025true}.

\emph{Custom-trained DSR models:} an AL-RNN \cite{brenner_almost_2024} (eq. \ref{eq:alrnn}), which is used for the experts in DynaMix, a Neural-ODE \cite{chen_neural_2018}, and a reservoir computer \cite{patel_using_2023}; each is trained from scratch on the same $T=2000$ context segment provided to the foundation models. Evaluations were taken from \cite{hemmer2025true}.

\emph{Context parroting:} the algorithm of \citet{zhang2026context}, which retrieves the best-matching motif in $\bm{C}$ and copies its successor as forecast. As shown in Thm. \ref{thm:parroting}, this baseline corresponds to the $\alpha=0$ case of DynaBase on a delay-embedded context.

\subsection{Performance measures}\label{appx:metrics}

\paragraph{Long-term metrics}
To evaluate the \textit{geometric similarity} between true and model-generated reconstructions, we employ a measure, $D_{\text{stsp}}$, based on the Kullback-Leibler (KL) divergence evaluated in state space \cite{koppe_identifying_2019, mikhaeil_difficulty_2022, gilpin_model_2023, pals2024inferring, zhang2025zeroshotforecastingchaoticsystems,lai2025panda,hemmer2025true}. This measure quantifies the (mis)match between the ground-truth spatial distribution of trajectory points, $p_{\text{true}}(\bm{x})$, and the distribution $p_{\text{gen}}(\bm{x}|\bm{z})$ of points from trajectories freely generated by the model:
\begin{equation}
    D_{\text{stsp}}\bigl(p_{\text{true}}(\bm{x}) \,\|\, p_{\text{gen}}(\bm{x}|\bm{z})\bigr) = \int p_{\text{true}}(\bm{x}) \log \frac{p_{\text{true}}(\bm{x})}{p_{\text{gen}}(\bm{x}|\bm{z})} \, d\bm{x}.
\end{equation}
In practice, we approximate this via a discrete binning of the state space into $K = m^N$ bins, where $m$ is the number of bins per dimension and $N$ is the system dimensionality, estimating occupation probabilities through relative frequency of visits $\hat{p}_i$:
\begin{equation}
    D_{\text{stsp}} \approx \sum_{i=1}^{K} \hat{p}_{\text{true};i} \log\frac{\hat{p}_{\text{true};i}}{\hat{p}_{\text{gen};i}}.
\end{equation}
To ensure that the system has reached a steady-state distribution, long trajectories ($T=10000$) are sampled from the trained model. For the 3d attractors we set $m = 30$ bins per dimension, following \cite{hemmer_optimal_2024}.

To evaluate long-term \textit{temporal} reconstruction quality, we calculate the Hellinger distance $D_H$ between the power spectra of the ground-truth and model-generated time series \cite{mikhaeil_difficulty_2022, hess_generalized_2023, pals2024inferring, durstewitz2026position}. We first apply dimension-wise Fast Fourier Transforms (FFT) to the time series and then smooth the resulting power spectra with a Gaussian kernel (with $\sigma=20$), followed by normalization to enable comparison across dimensions. High-frequency tails dominated by noise are removed. The Hellinger distance between the smoothed true spectrum $F(\omega)$ and the generated spectrum $G(\omega)$ is given by
\begin{equation}
    D_H\bigl(F(\omega), G(\omega)\bigr) = \sqrt{1 - \int_{-\infty}^{\infty} \sqrt{F(\omega)\,G(\omega)}\, d\omega} \;\in [0, 1],
\end{equation}
where values close to $0$ indicate high spectral agreement and values close to $1$ indicate complete disagreement. $D_H$ is reported as the average across all dimensions. Note that both $D_{\text{stsp}}$ and $D_H$ are designed to capture \emph{long-term} reconstruction quality in the limit $T \to \infty$, after transients have settled and the attractor geometry has been traced out reasonably well (which was the case here for $T=10000$ steps).

\paragraph{Short-term forecast quality}
For evaluating short-term predictive performance, we use the Mean Absolute Scaled Error (MASE) \cite{hewamalage2023forecast, hemmer2025true}, which is a normalized $n$-step-ahead prediction error aggregated across all $N$ output dimensions:
\begin{equation}
    \text{MASE} = \frac{1}{N} \sum_{i=1}^{N} \frac{\dfrac{1}{n} \displaystyle\sum_{t=1}^{n} \bigl|x_{i,t} - \hat{x}_{i,t}\bigr|}{\dfrac{1}{T} \displaystyle\sum_{t=1}^{T} \bigl|x_{i,t} - x_{i,t-1}\bigr|},
\end{equation}
where $T$ denotes the total length of the ground truth time series, $n$ is the forecast horizon, $x_{i,t}$ the true value, and $\hat{x}_{i,t}$ the model prediction for dimension $i$ at time $t$. We evaluate at $n = 10$ steps.

\paragraph{Lyapunov exponent}\label{sec:lyapunov}
To numerically estimate the \textit{maximum Lyapunov exponent} $\lambda_{\text{max}}$ from a time series $\bm{X}$, we use the \textit{Rosenstein algorithm} \cite{rosenstein1993practical}. For each state $\bm{x}_i$, its nearest neighbor $\bm{x}_j$ is identified subject to $|i - j| > l_t$, which excludes direct temporal neighbors on the same trajectory segment. We set $l_t = 100$ (based on the mean periodicity \cite{rosenstein1993practical}). The divergence between neighboring trajectories is then tracked over time steps $k = 0, \ldots, k_{\max}$,
\begin{equation}
    d_i(k) = \| \bm{x}_{i+k} - \bm{x}_{j(i)+k} \|_2\;.
\end{equation}
Since nearby trajectories diverge as
\begin{equation}
    d_i(k) \approx d_i(0)\, e^{\lambda_{\text{max}} k \Delta t}\;,
\end{equation}
where $\Delta t$ is the sampling interval, taking the logarithm and averaging over all pairs gives
\begin{equation}
    \langle \ln d(k) \rangle \approx \lambda_{\max} k \Delta t + C\;.
\end{equation}
For chaotic systems, this quantity grows approximately linearly with $k$ for a suitable $k_{\max}$ (here: $100$) \cite{kantz_nonlinear_2004}, with slope $\lambda_{\max} > 0$, whereas periodic systems yield $\lambda_{\max} = 0$.

When the equations of motion $\dot{\mathbf{x}} = f(\mathbf{x})$ are known, the full Lyapunov spectrum $\{\lambda_i\}$ can be computed via a \textit{QR decomposition algorithm} \cite{benettin1980lyapunov}, which integrates the system along its variational equations $\dot{Q} = J(\mathbf{x})Q$, where $J(\bm{x}) = \partial f / \partial \bm{x}$ is the Jacobian of $f$ and $Q$ is an $n \times n$ matrix of perturbation vectors initialized to the identity $I_n$. At regular time intervals, a QR decomposition $Q = \tilde{Q}R$ is performed and the columns of $\tilde{Q}$, which form an orthonormal basis, replace $Q$. The diagonal entries $R_{ii}$ record the local expansion rates, and the $k$-th Lyapunov exponent is obtained as the time-averaged logarithmic growth rate,
\begin{equation}
    \lambda_k = \lim_{T \to \infty} \frac{1}{T} \sum_{j} \log \left| R_{kk}^{(j)} \right|,
\end{equation}
where the sum runs over all QR factorizations performed up to time $T$.

\subsection{Details on DynaMix reduction}\label{appx:ablation}

\begin{table}[h]
\centering
\caption{Zero-shot performance of DynaMix and model simplifications across all 54 test DS used in \cite{hemmer2025true}. DynaMix and simplified models are compared for the same context length $T_C=2000$. Reported are median$\pm$MAD for geometrical disagreement $D_\text{stsp}$, temporal disagreement $D_H$, and 10-step-ahead prediction error $\text{MASE}$}
\vspace{0.5em}
\begin{tabular}{lccc}
\hline
Model & $D_{\text{stsp}}\downarrow$ & $D_H\downarrow$ & $\text{MASE}\downarrow$ \\
\hline
Original DynaMix & $3.80\pm1.40$ & $0.16\pm0.06$ & $0.35\pm0.17$ \\
Linear experts \& gating ($M>N$) & $3.10\pm1.19$ & $0.13\pm0.07$ & $0.26\pm0.11$ \\
Linear experts \& gating ($M=N$) & $2.85\pm1.30$ & $0.13\pm0.06$ & $0.23\pm0.10$ \\
\hline
\end{tabular}\label{tab:dynamix_ablations}
\end{table}

Here we provide the supporting evidence and derivation details for the successive reduction of DynaMix to the \emph{recursive affine NN map} eq. \ref{eq:linear-general} described in Sec.~\ref{sec:ablations}.

\paragraph{Linear approximation}
As noted in \cite{hemmer2025true}, replacing the MLP in eq. \ref{eq:attention-weights-dynamix} with a linear layer does not reduce performance. Here we further show that dropping the softmax nonlinearity $\sigma$ in eq. \ref{eq:attn_weights} and the piecewise-linear activation $\Phi^*$ in eq. \ref{eq:alrnn}, reducing the mixture of AL-RNN experts to a single affine map $F_\text{Linear}$ as in eq. \ref{eq:dynamix_linear}, does not cause performance to degrade either. Table \ref{tab:dynamix_ablations} confirms this: \emph{Linear experts \& gating} matches, or even improves upon, the original DynaMix checkpoint on all three measures across the DS test set.

\paragraph{Latent dimension}
The AL-RNN experts in DynaMix operate in an $M$-dimensional latent space distinct from the $N$-dimensional observation space, requiring the projection $\bm{D}$ in eq. \ref{eq:attn_weights}. Replacing the expert block with $F_\text{Linear}$ eliminates this distinction, yielding $M=N$ and $\bm{D}=\bm{I}$, such that the model operates entirely in observation space. Table \ref{tab:dynamix_ablations} (\emph{Linear experts \& gating, $M=N$}) confirms this does not reduce performance on the present test set. Working in a single space is also a prerequisite for the $O(N)$ equivariance exploited in Thm. \ref{thm:uniqueness}, since rotations and the nearest-neighbor distance must act on the same space in which the state $\bm{z}_t$ and the context $\bm{C}$ live. Note we can always perform a classical delay-embedding \cite{takens_detecting_1981,sauer_embedology_1991} should this dimensionality not be sufficient to properly represent an empirically observed DS.

\paragraph{Nearest-neighbor selector}
In the publicly available DynaMix checkpoint, the learned attention temperature $\tau_\text{att}$ converged to $|\tau_{\text{att},\text{learned}}| \approx 0.06$, indicating that the soft attention eq. \ref{eq:attn_weights} is already operating close to its $\tau_\text{att} \to 0$ limit, i.e. effectively selecting a single nearest context point at each step. Two deliberate departures from DynaMix are made in passing to the hard limit. First, DynaMix injects Gaussian exploration noise $\varepsilon \sim \mathcal{N}(0, \bm{\Sigma})$ into the attention eq. \ref{eq:attn_weights}; in favor of a minimal and deterministic model, we set $\bm{\Sigma}=0$. Second, DynaMix computes context distances with the $\ell^1$-norm, whereas we adopt the Euclidean $\ell^2$-norm. The invariance of $\ell^2$ distances under orthogonal transformations $R \in O(N)$, is used in Thm. \ref{thm:uniqueness}. Together these choices yield the deterministic selector $\tau(\bm{z}_t) = \argmin_s \|\bm{c}_s - \bm{z}_t\|_2$.

\paragraph{CNN expansion}
To make the recursive map fully explicit in terms of individual context points, we expand the CNN operation. Since DynaMix uses a single linear CNN layer with kernel size $2$, for $s \in 1\dots T_C - 1$ we have
\begin{equation}
    \tilde{\bm{c}}_s = \text{CNN}(\bm{C})_s = \bm{K}^{(1)} \bm{c}_s + \bm{K}^{(2)} \bm{c}_{s+1} + \bm{b},
\end{equation}
where $\bm{K}^{(1)}, \bm{K}^{(2)} \in \mathbb{R}^{N \times N}$ and $\bm{b} \in \mathbb{R}^N$ are learned weights. Substituting into eq. \ref{eq:dynamix_linear} with the nearest-neighbor selector and absorbing $\bm{B}\bm{K}^{(1)}$, $\bm{B}\bm{K}^{(2)}$ and the bias into $\bm{B}^{(1)}$, $\bm{B}^{(2)}$, and $\tilde{\bm{h}}$ respectively, yields the \emph{recursive affine NN map} eq. \ref{eq:linear-general} as stated in the main text.

\subsection{Datasets} \label{sec:data}
Our DS dataset consists of simulated time series of length $10^5$, produced from $3d$ DS collected in \cite{gilpin_chaos_2022}. For the test set, we used the same $54$ distinct systems as in \cite{hemmer2025true} to enable a fair comparison among all the baseline models in relation to the originally published results. To evaluate DynaBase in a zero-shot setting, we estimated $\alpha$ using a training set of $30$ different chaotic DS, analogous to the setup in \cite{hemmer2025true}.
\begin{figure*}[!ht]
    \centering
    \includegraphics[width=0.6\linewidth, angle=90]{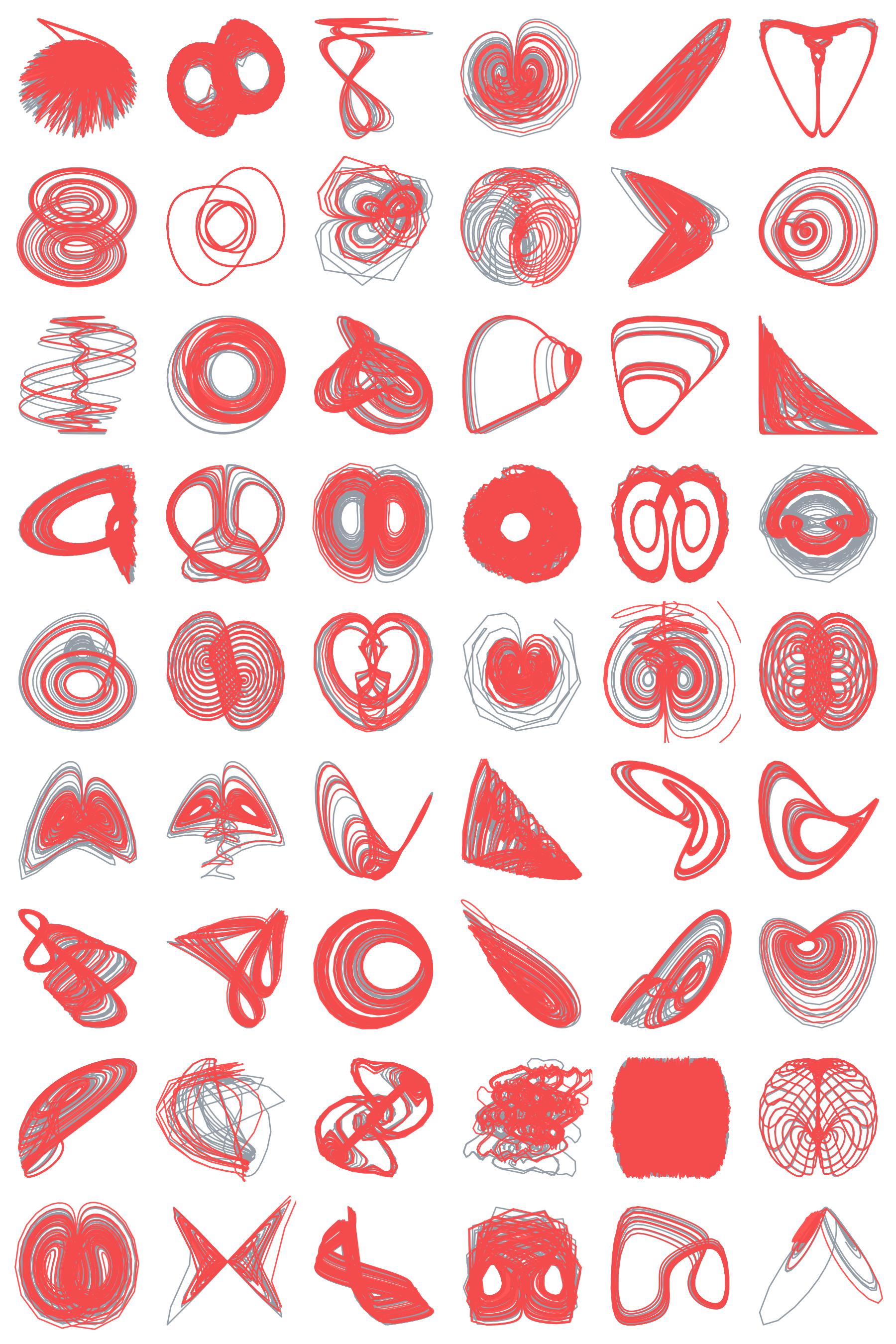}
    \caption{DSR test set of 54 different DS gathered from \cite{gilpin_chaos_2022}. Ground truth (gray) and DynaBase reconstructions using a context window of $T_C=2000$ (red).}
    \label{fig:all_reconstructions}
\end{figure*}

In addition, we conduct experiments on the $2d$ Selkov system \cite{sel1968self}, which models the kinetics of an open monosubstrate enzyme reaction and is defined by the equations
\begin{equation}
    \begin{aligned}
    \frac{dx}{dt} &= -x + ay + x^2y, \\
    \frac{dy}{dt} &= b -ay - x^2y ,
    \end{aligned}
\end{equation}
where we chose $a=0.1$ and $b=0.5$. The system was solved numerically with integration time step $\Delta t = 0.3$ using \texttt{scipy.integrate} with the \texttt{RK45} solver.

For Fig. \ref{fig:lorenz_alphas} we used the Lorenz-63 attractor \cite{lorenz_deterministic_1963}, originally formulated by Edward Lorenz in 1963 \cite{lorenz_deterministic_1963} to model atmospheric convection. It is defined by
\begin{align}
    \frac{\text{d}x}{\text{d}t} & = \sigma(y-x)\\ \nonumber
    \frac{\text{d}y}{\text{d}t} & = x(\rho-z)-y\\ \nonumber
    \frac{\text{d}z}{\text{d}t} & = xy-\beta z,
\end{align}
where $\sigma, \rho, \beta$, are control parameters of the system (we fix $\sigma=10$ and $\rho=28$, and vary $\beta\in[0,1]$, producing regimes with point attractors, cyclic, and chaotic dynamics). The system was solved numerically with integration time step $\Delta t = 0.02$ using \texttt{scipy.integrate} with the \texttt{RK45} solver.

\clearpage
\subsection{Further empirical results}

\begin{figure*}[!ht]
    \centering
    \includegraphics[width=0.97\linewidth]{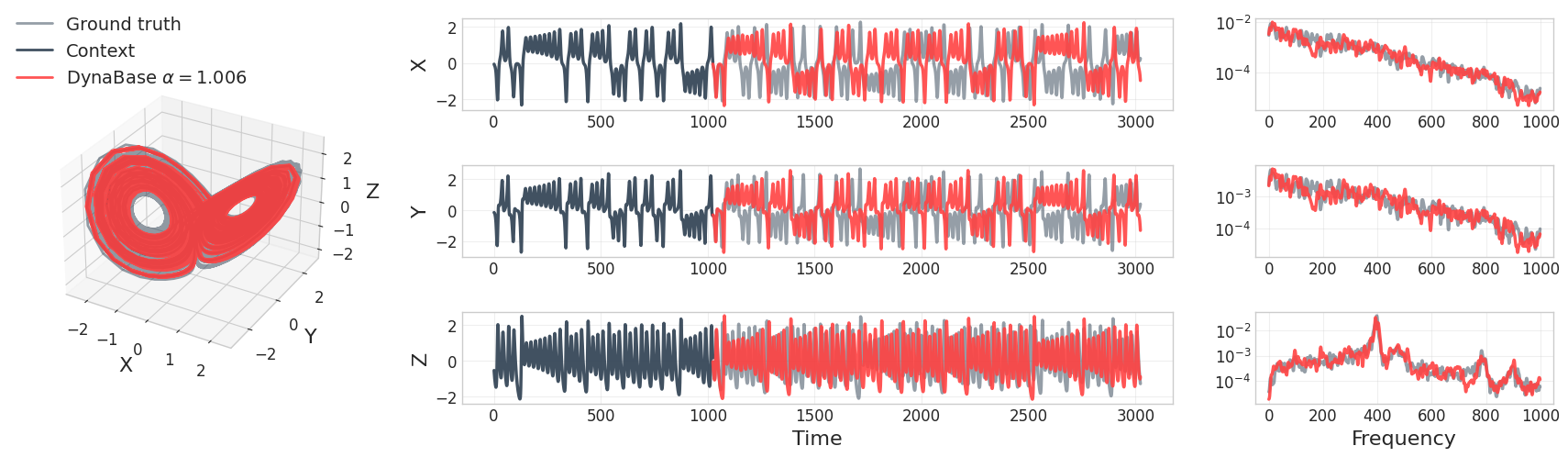}
    \caption{Zero-shot reconstruction of chaotic Lorenz-63 system using DynaBase with $\alpha=1.006$. Left: State space, center: time graphs, right: power spectrum.}
    \label{fig:reconstruction1}
\end{figure*}
\begin{figure*}[!ht]
    \centering
    \includegraphics[width=0.97\linewidth]{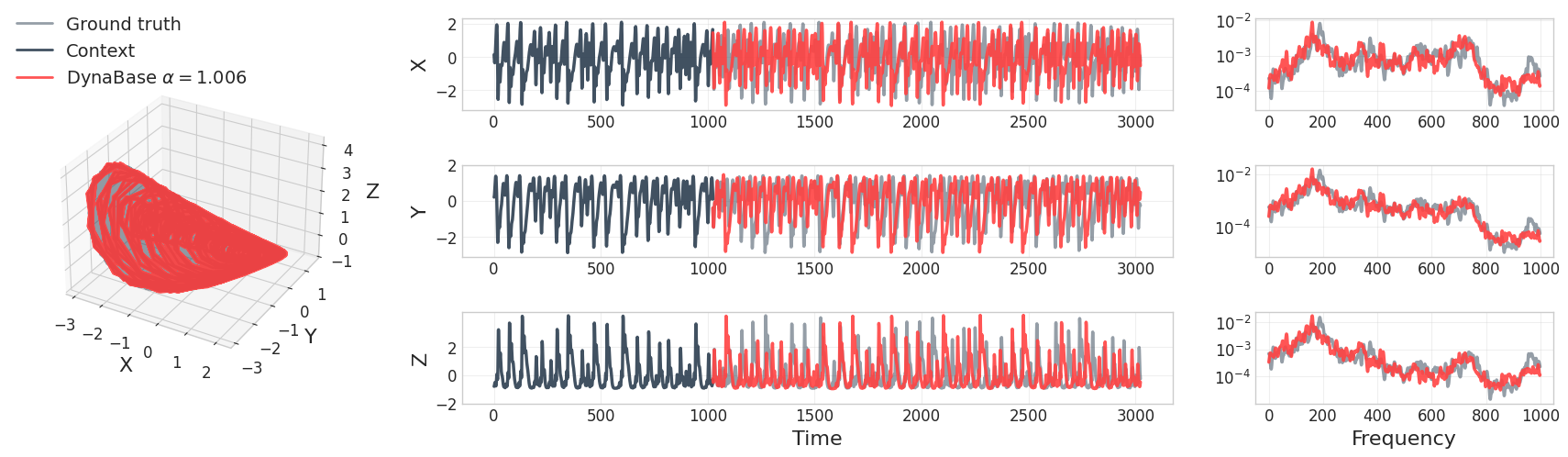}
    \caption{Zero-shot reconstruction of chaotic Sprott-F system using DynaBase with $\alpha=1.006$. Left: State space, center: time graphs, right: power spectrum.}
    \label{fig:reconstruction2}
\end{figure*}
\begin{figure*}[!ht]
    \centering
    \includegraphics[width=0.97\linewidth]{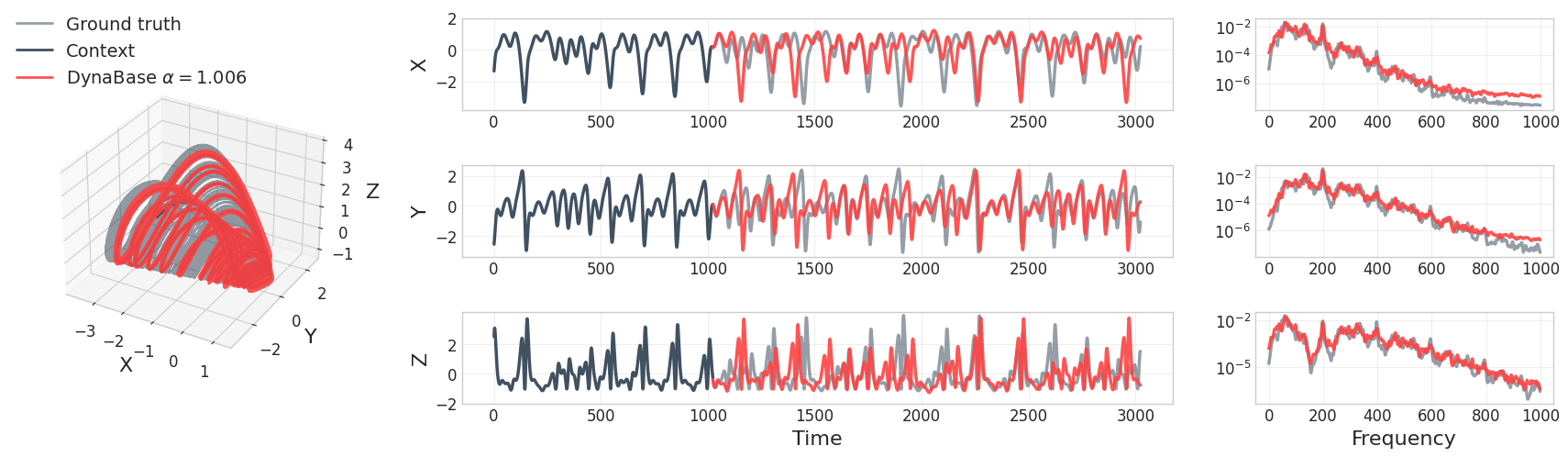}
    \caption{Zero-shot reconstruction of chaotic Sprott-D system using DynaBase with $\alpha=1.006$. Left: State space, center: time graphs, right: power spectrum.}
    \label{fig:reconstruction3}
\end{figure*}
\begin{figure*}[!ht]
    \centering
    \includegraphics[width=0.97\linewidth]{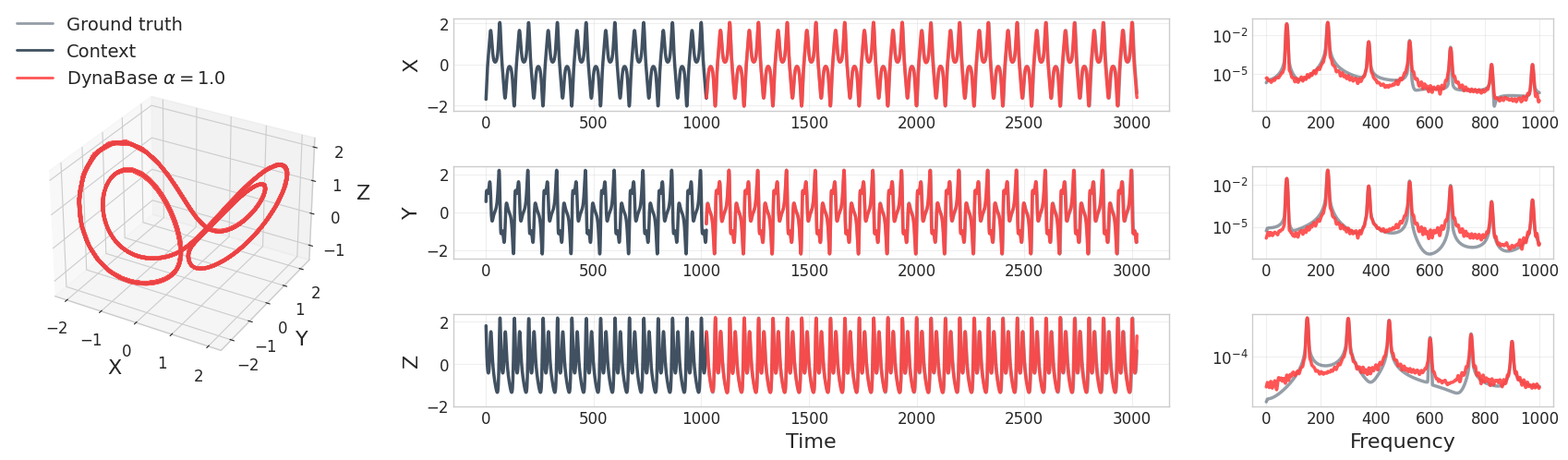}
    \caption{Zero-shot reconstruction using $\alpha=1$ of cyclic Lorenz-63 system using DynaBase. Left: State space, center: time graphs, right: power spectrum.}
    \label{fig:reconstruction4}
\end{figure*}

\begin{figure*}[!ht]
    \centering
    \includegraphics[width=0.7\linewidth]{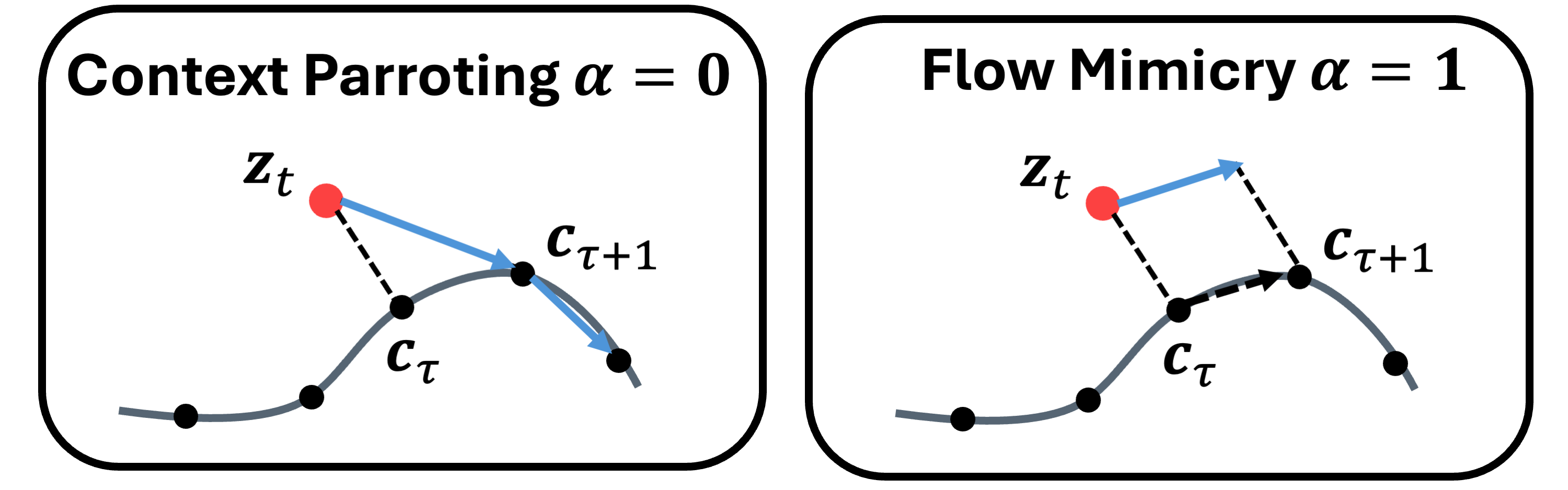}
    \caption{Illustration of context parroting and flow mimicry. \textbf{Left:} \emph{context parroting} --- at $(\alpha,\beta)=(0,0)$ the state snaps onto the successor $\bm{c}_{\tau(\bm{z}_t)+1}$ of its nearest context neighbor, producing a piecewise-constant map that replays the context. \textbf{Right:} \emph{flow mimicry} --- at $(\alpha,\beta)=(1,-1)$ the state is updated by adding the finite-difference increment $\bm{c}_{\tau(\bm z_t)+1}-\bm{c}_{\tau(\bm z_t)}$ evaluated at the nearest context neighbor, a local Euler step on the context vector field. }
    \label{fig:strategy_illustration}
\end{figure*}

\begin{figure*}[!ht]
    \centering
    \includegraphics[width=0.6\linewidth]{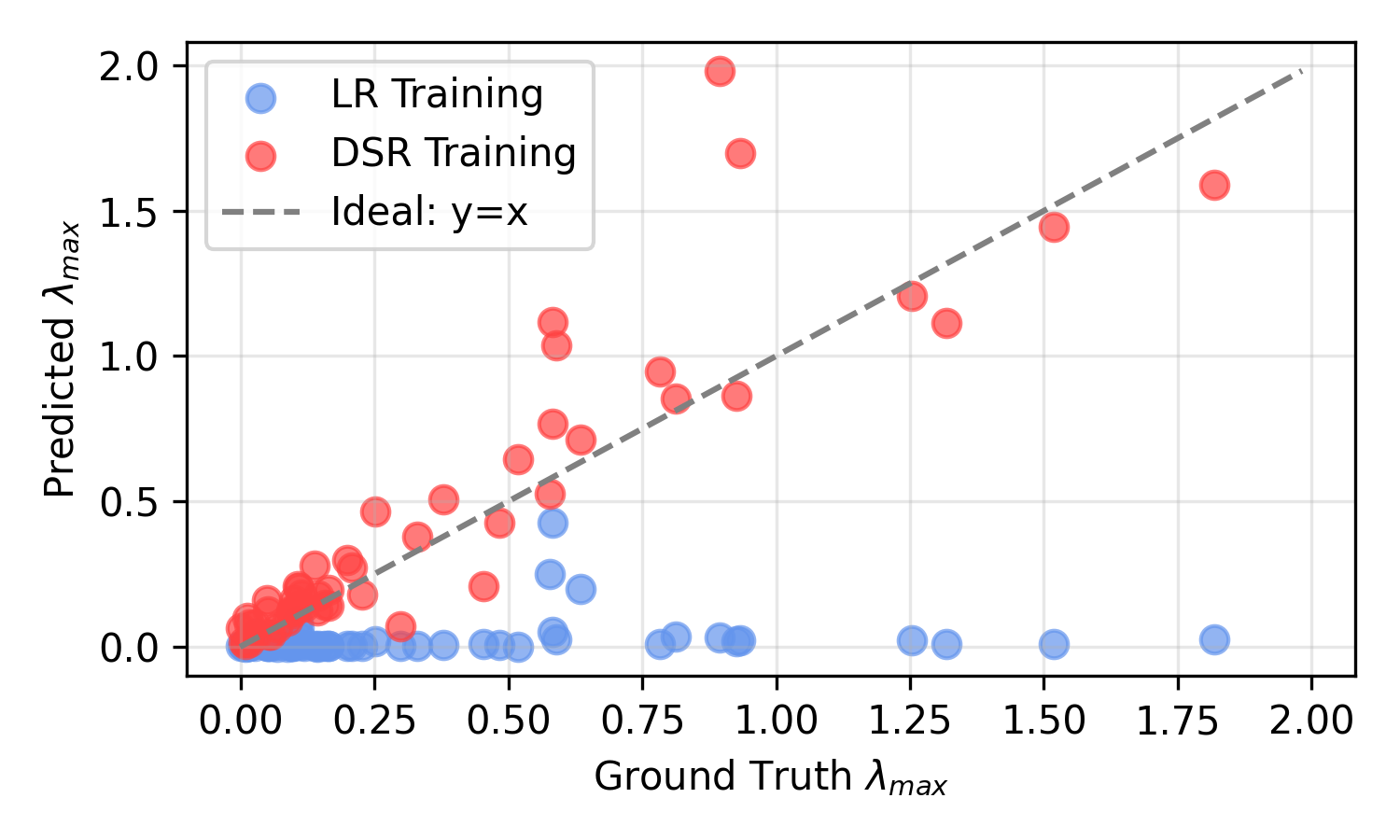}
    \caption{Lyapunov exponents estimated via the Rosenstein algorithm (see Appx. \ref{sec:lyapunov}) across test DS for both linear regression (blue) and DSR-based training (red) vs. ground truth exponents. Only DSR training produces long-term dynamics that closely follows that of the ground truth, including chaotic attractors with $\lambda_{max}>0$.}
    \label{fig:lyapunov_histogram}
\end{figure*}

\begin{figure*}[!ht]
    \centering
    \includegraphics[width=0.99\linewidth]{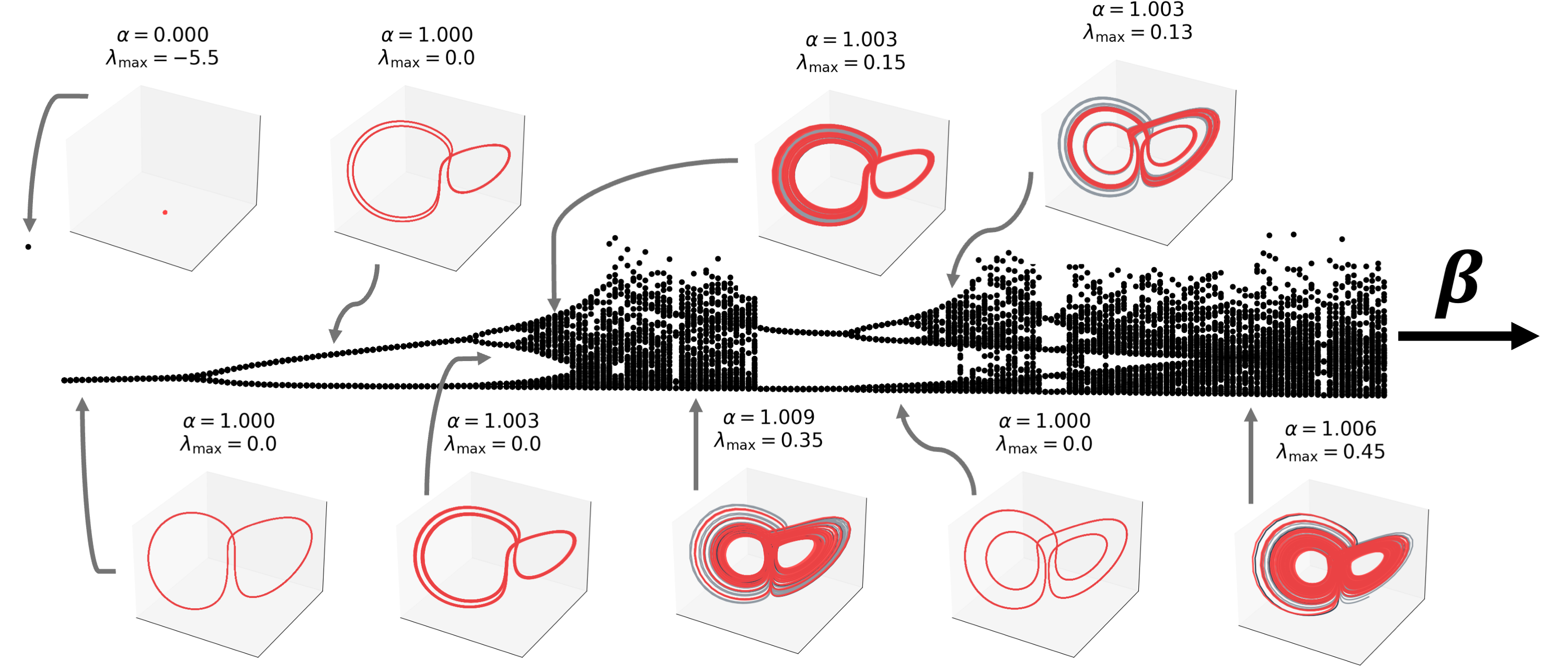}
    \caption{Different reconstructions by DynaBase along a bifurcation diagram of the Lorenz-63 system with system parameters $\sigma=10$, $\rho=28$ and $\beta\in[0,1]$. For each reconstruction, the estimated alpha value together with the GT system's true Lyapunov exponent (estimated via QR decomposition, see Sec. \ref{sec:lyapunov}) is indicated.}
    \label{fig:lorenz_alphas}
\end{figure*}

\clearpage
\begin{figure*}[!ht]
    \centering
    \includegraphics[width=0.99\linewidth]{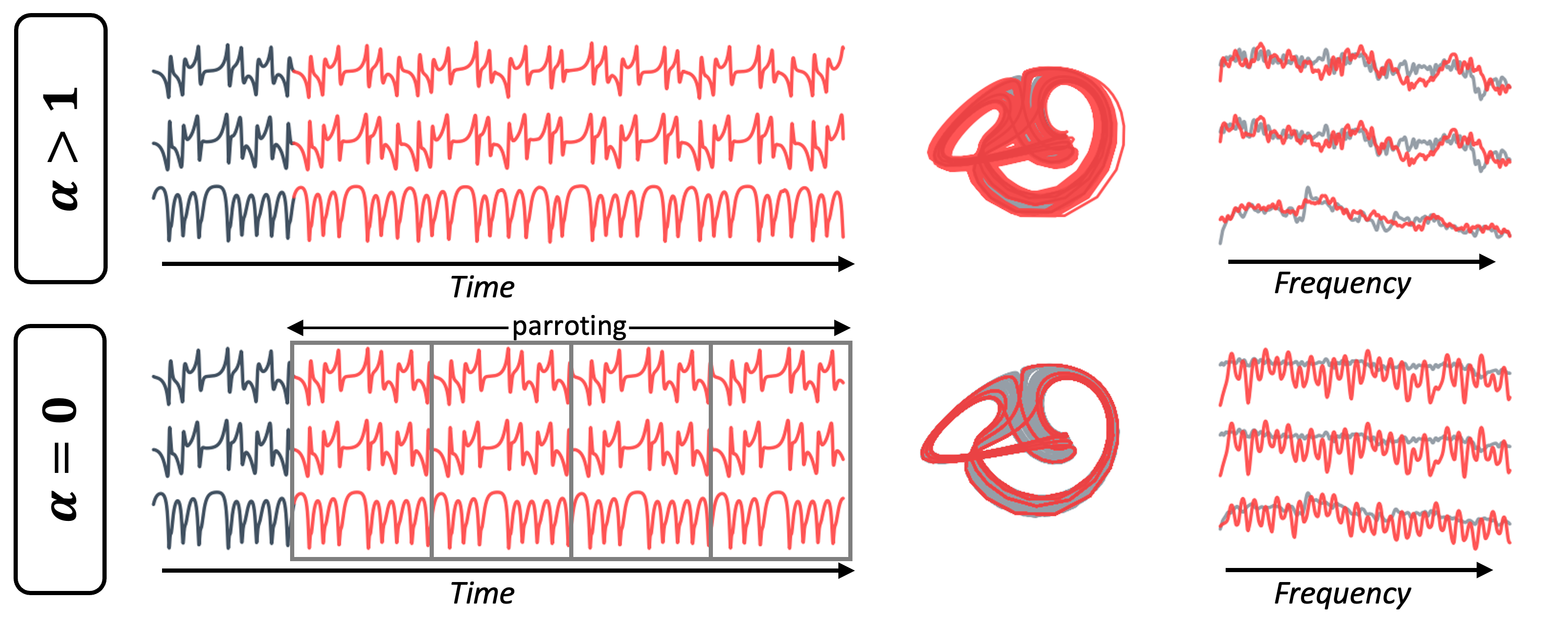}
    \caption{DynaBase forecasting of a chaotic laser system \cite{gilpin_chaos_2022}, using the same context once with $\alpha>1$ and once with $\alpha=0$. Note that despite the system's non-periodic context trajectory, parroting ($\alpha=0$) incorrectly produces a discrete cycle. Shown are time graphs (left), state space (center), and power spectrum (right).}
    \label{fig:forecast_result_strategy}
\end{figure*}

\begin{figure*}[!ht]
    \centering
    \includegraphics[width=0.99\linewidth]{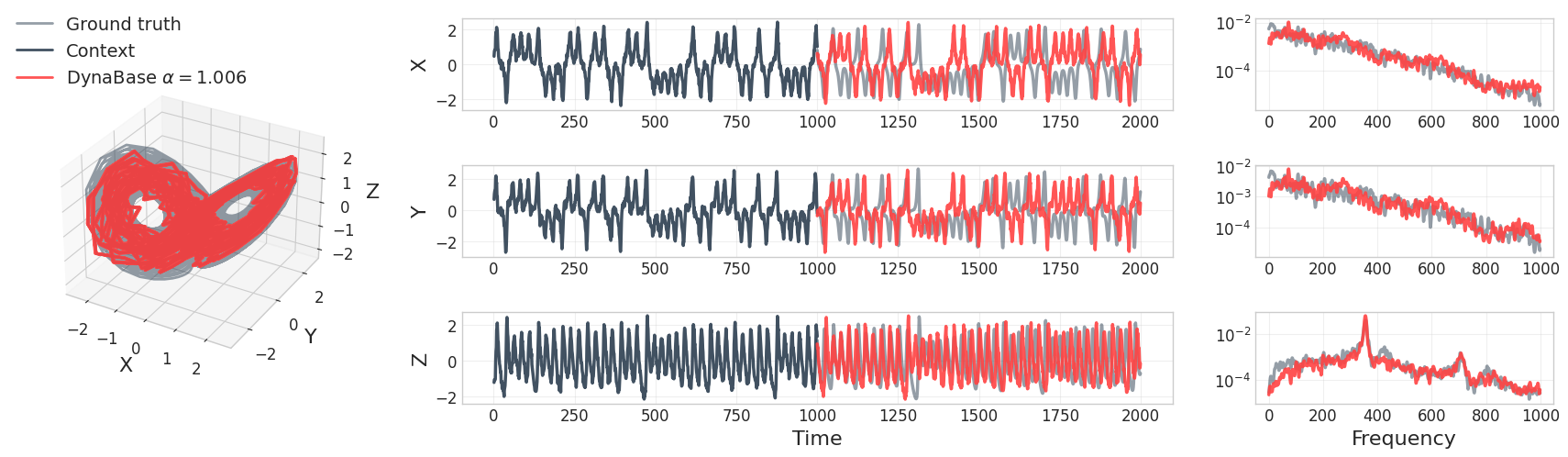}
    \caption{DynaBase reconstruction of Lorenz-63 adding $15\%$ Gaussian noise to the context signal. Left: State space, center: time graphs, right: power spectrum.}
    \label{fig:dynabase_noise}
\end{figure*}

\begin{table}[h]
\centering
\caption{Short-term forecasting accuracy (MASE, median $\pm$ MAD) on the DS test set for three DynaBase training strategies (linear regression (LR), DSR objective, and zero-shot) across various prediction horizons $n$. Best median values in bold. Note that LR training outperforms the other approaches on this measure for intermediate forecasting horizons, while for larger horizons ($n=300$) performance differences narrow down again due to the general limitations caused by exponential trajectory divergence in chaotic systems.}
\vspace{0.5em}
\resizebox{\textwidth}{!}{%
\begin{tabular}{lccccc}
\hline
Model & $\text{MASE}\;(n=10)$ & $\text{MASE}\;(n=50)$ & $\text{MASE}\;(n=100)$ & $\text{MASE}\;(n=200)$ & $\text{MASE}\;(n=300)$ \\
\hline
DynaBase LR & $\bm{0.13\pm0.08}$ & $\bm{0.51\pm0.40}$ & $\bm{0.71\pm0.62}$ & $\bm{1.06\pm0.93}$ & $\bm{2.64\pm2.18}$ \\
DynaBase DSR \& Gating & $\bm{0.13\pm0.08}$ & $0.62\pm0.48$ & $1.06\pm0.89$ & $2.22\pm1.64$ & $2.80\pm1.96$ \\
DynaBase zero-shot & $\bm{0.13\pm0.07}$ & $0.61\pm0.48$ & $0.94\pm0.80$ & $2.29\pm1.69$ & $3.07\pm2.09$ \\\hline
\end{tabular}}\label{tab:dynabse_short-term}
\end{table}

\subsection{Further theoretical results}
\paragraph{Details on continuous-time limit.}\label{sec:continuous_derivation}
We begin by writing eq. \ref{eq:self-consistent-one-param} explicitly as a piecewise-affine discrete-time system
\begin{equation}
    \bm{z}_{k+1}=f_\alpha(\bm{z}_k)=\alpha\bm{I}\bm{z}_k+(\bm{c}_{\tau(\bm{z}_k)+1}-\alpha\bm{c}_{\tau(\bm{z}_k)}) \ .
\end{equation}
For values of $\alpha\in\mathbb{R}/\{0,1\}$, Thm.~3 in \cite{monfared_transformation_2020} guarantees the existence of an equivalent affine continuous-time system, in the sense that
\begin{equation}
    \bm{\zeta}(t_0)=\bm{z}_{k_0},\;\bm{\zeta}(t_0+h)=\bm{z}_{k_0+1} \ ,
\end{equation}
where $h$ denotes the discrete timestep at which observations $\bm{z}_k$ are sampled. The equivalent continuous-time system is then given by 
\begin{equation}
    \dv{\bm{\zeta}}{t} = \frac{\log\alpha}{h}\bm{\zeta}-\frac{1}{h}\frac{\log\alpha}{1-\alpha}(\bm{c}_{\tau(\bm{\zeta})+1}-\alpha\bm{c}_{\tau(\bm{\zeta})}) \ .
\end{equation}
For the case $\alpha=1$, we define $\frac{\log{\alpha}}{1-\alpha}\rvert_{\alpha=1} :=-1$ by its continuous extension. Such an extension does, however, not exist for $\alpha=0$. For $\alpha<0$, the logarithm is complex-valued. Hence, a real-valued continuous-time limit for $f_\alpha$ exists only for $\alpha>0$.

\paragraph{Computational complexity}
The minimal architecture of DynaBase makes it computationally highly efficient for zero-shot inference. By exploiting discrete nearest-neighbor lookup and efficient heuristic search algorithms such as $k$-d trees \cite{friedman1977algorithm}, the time complexity of running DynaBase autoregressively over $T_F$ forecast steps is
\begin{equation}
    \mathcal{T} = \mathcal{O}(T_F \log T_C),
\end{equation}
with a one-time overhead of $\mathcal{O}(T_C \log T_C)$ for constructing the $k$-d tree. In higher dimensions, query time degrades toward $\mathcal{O}(T_C)$ in the worst case. Provided the generated trajectory is not retained, the space complexity reduces to
\begin{equation}
    \mathcal{S} = \mathcal{O}(T_C).
\end{equation}
These bounds allow DynaBase to generate long forecasts over extended contexts at low computational cost — in contrast to standard transformer architectures, which incur quadratic time and space complexity in context length.


\section{Proofs of Theorems}\label{sec:proofs}

In the following we use the same notation as introduced in Sec. \ref{sec:dynabase}, with $\bm{C}=\{\bm{c}_1,\dots,\bm{c}_{T_C}\}\in\mathbb{R}^{N\times T_C}$ a context sequence, $\tau(\bm{z})=\argmin_{s}\norm{\bm{c}_s-\bm{z}}_2$ the nearest-neighbor lookup, and
\begin{equation}
    f_{\alpha,\beta}(\bm{z})=\alpha\bm{z}+\beta\bm{c}_{\tau(\bm{z})}+\gamma\bm{c}_{\tau(\bm{z})+1},\qquad \gamma=1-\alpha-\beta,
\end{equation}
the two-parameter DynaBase map; $f_\alpha$ denotes its self-consistent reduction with $\beta=-\alpha$, $\gamma=1$. Ties in $\tau$ are broken by the smallest index; the tie set has measure zero and does not affect any of the statements below.

\subsection{Proof of Theorem \ref{thm:uniqueness} (Uniqueness of form)}
\begin{proof}
We start from the affine ansatz eq. \ref{eq:linear-general},
\begin{equation}
    f_\text{Linear}(\bm{z},\bm{C})=\bm{A}\bm{z}+\bm{B}^{(1)}\bm{c}_{\tau(\bm{z})}+\bm{B}^{(2)}\bm{c}_{\tau(\bm{z})+1}+\tilde{\bm{h}},
    \label{eq:proof-uniq-ansatz}
\end{equation}
with $\bm{A},\bm{B}^{(1)},\bm{B}^{(2)}\in\mathbb{R}^{N\times N}$ and $\bm{h}\in\mathbb{R}^N$, and impose the two structural assumptions in turn.

\paragraph{Orthogonal equivariance.}
$O(N)$-equivariance requires $Q\,f_\text{Linear}(\bm{z},\bm{C})=f_\text{Linear}(Q\bm{z},Q\bm{C})$ for all $Q\in O(N)$, where $Q\bm{C}:=\{Q\bm{c}_s\}_s$. Since $\tau$ is itself $O(N)$-equivariant, substituting eq. \ref{eq:proof-uniq-ansatz} and matching coefficients of the independent vectors $\bm{z},\bm{c}_{\tau(\bm{z})},\bm{c}_{\tau(\bm{z})+1}$ gives
\begin{equation}
    Q\bm{A}=\bm{A}Q,\quad Q\bm{B}^{(1)}=\bm{B}^{(1)}Q,\quad Q\bm{B}^{(2)}=\bm{B}^{(2)} Q,\quad Q\bm{h}=\bm{h},\qquad \forall Q\in O(N).
\end{equation}
The standard representation of $O(N)$ on $\mathbb{R}^N$ is irreducible for $N\ge 2$, so by Schur's lemma every commuting matrix is a scalar multiple of the identity, and the only $O(N)$-fixed vector is $\bm{0}$:
\begin{equation}
    \bm{A}=\alpha\bm{I},\quad \bm{B}^{(1)}=\beta\bm{I},\quad \bm{B}^{(2)}=\gamma\bm{I},\quad \bm{h}=\bm{0},\qquad \alpha,\beta,\gamma\in\mathbb{R}.
\end{equation}
Hence eq. \ref{eq:proof-uniq-ansatz} reduces to $f(\bm{z})=\alpha\bm{z}+\beta\bm{c}_{\tau(\bm{z})}+\gamma\bm{c}_{\tau(\bm{z})+1}$.

\paragraph{Translation equivariance.}
For any $\bm{v}\in\mathbb{R}^N$, applying the joint shift $\bm{z}\mapsto\bm{z}+\bm{v}$, $\bm{C}\mapsto\bm{C}+\bm{v}$ leaves $\tau$ invariant. Substituting into the previous form,
\begin{equation}
    f(\bm{z}+\bm{v};\bm{C}+\bm{v})=\alpha(\bm{z}+\bm{v})+\beta(\bm{c}_{\tau(\bm{z})}+ \bm{v})+\gamma(\bm{c}_{\tau(\bm{z})+1}+\bm{v})=f(\bm{z};\bm{C})+(\alpha+\beta+\gamma)\bm{v}.
\end{equation}
Translation equivariance demands $f(\bm{z}+\bm{v};\bm{C}+\bm{v})=f(\bm{z};\bm{C})+\bm{v}$ for all $\bm{v}$, hence $\alpha+\beta+\gamma=1$. Setting $\gamma=1-\alpha-\beta$ yields the two-parameter family
\begin{equation}
    f_{\alpha,\beta}(\bm{z})=\alpha\bm{z}+\beta\bm{c}_{\tau(\bm{z})}+(1-\alpha-\beta)\,\bm{c}_{\tau(\bm{z})+1}.
\end{equation}
\end{proof}

\subsection{Proof of Theorem \ref{thm:parroting} (Reduction to context parroting)}
\begin{proof}
For $\alpha=\beta=0$, the DynaBase map on the delay-embedded context $\overline{\bm{C}}_{1:T_C-D+1}\in\mathbb{R}^{D\times (T_C-D+1)}$ reduces to the deterministic lookup
\begin{equation}
    \bm{z}_{t+1}=f_0(\bm{z}_t)=\overline{\bm{c}}_{\tau(\bm{z}_t)+1},\qquad \tau(\bm{z})=\argmin_{1\le s\le T_C-D}\norm{\overline{\bm{c}}_s-\bm{z}}_2. \label{eq:proof-parrot}
\end{equation}

\paragraph{Initial step.}
Starting from $\bm{z}_0=\overline{\bm{c}}_{T_C-D+1}$ (the most recent $D$ context entries), the first application of eq. \ref{eq:proof-parrot}, assuming the final $D$ steps are omitted for $\tau(\bm{z}_0)$ (see \cite{zhang2026context}), yields
\begin{equation}
    s^\star :=\tau(\bm{z}_0)=\argmin_{1\le s\le T_C-2D}\norm{\overline{\bm{c}}_s-\overline{\bm{c}}_{T_C-D+1}}_2.
\end{equation}
Since $\overline{\bm{c}}_s=(c_s,\dots,c_{s+D-1})$, this distance is the Euclidean distance between two $D$-length motifs of the original context---identical to the motif-matching step of \cite[Algorithm~1]{zhang2026context}. Hence $s^\star$ coincides with the best-matching motif index produced by context parroting.

\paragraph{Induction step.}
Suppose $\bm{z}_k=\overline{\bm{c}}_{s^\star+k}$ with $s^\star+k\le T_C-D$. Then $\tau(\bm{z}_k)=s^\star+k$ at zero distance, and
\begin{equation}
    \bm{z}_{k+1}=\overline{\bm{c}}_{s^\star+k+1}=(c_{s^\star+k+1},\dots,c_{s^\star+k+D}).
\end{equation}
By induction, $\bm{z}_t=\overline{\bm{c}}_{s^\star+t}$ for all $0\le t\le T_C-D-s^\star$. Extracting observations from the delay embedded vectors results in
\begin{equation}
    x_t=z_{D,t}=c_{s^\star+t+D-1},
\end{equation}
which is exactly the forecast emitted by \cite[Algorithm~1]{zhang2026context} after locating the best-matching motif $s^\star$.

\paragraph{Wrap-around.}
Once $s^\star+t$ exceeds $T_C-D$, both algorithms treat the current tail as a new query and repeat the motif search. Both procedures execute the same operation, so the equivalence extends to arbitrary forecast horizons.
\end{proof}

\subsection{Proof of Theorem \ref{thm:chaos} (Long-term dynamics)}
\begin{proof} For $\alpha=0$, $f_\alpha$ reduces to $f_0(\bm{z})=\bm{c}_{\tau(\bm{z})+1}$, with finite range $\{\bm{c}_2,\dots,\bm{c}_{T_C}\}$. After one iteration every orbit lies in this finite set, on which $f_0$ acts as the deterministic graph map $\bm{c}_s\mapsto\bm{c}_{\tau(\bm{c}_s)+1}=\bm{c}_{s+1}$ for $s<T_C$. Each node has out-degree exactly one, so iteration on a finite directed functional graph must enter a cycle in at most $T_C-1$ steps and remain periodic thereafter. The differential of $f_0$ vanishes on the interior of every Voronoi cell (the map is piecewise constant), so the Lyapunov exponent \textit{within each cell} is $\lambda_{\text{max}}=-\infty<0$ everywhere. Since the saltation operator $\mathcal{S}_{s,s+1}=\mathbf{I}$ between two Voronoi cells $V_s$ and $V_{s+1}$ is always finite and nonzero for $\alpha=0$, we obtain $\lambda_{\text{max}}=-\infty$ globally. 
\end{proof}

\subsection{Proof of Theorem \ref{thm:boundedness} (Boundedness)}
\begin{proof}
For $\bm{z}\in\mathbb{R}^N$ define the distance to the context set
\begin{equation}
    d_{\bm{C}}(\bm{z}):=\min_{\bm{c}\in\bm{C}}\norm{\bm{c}-\bm{z}}_2=\norm{\bm{z}-\bm{c}_{\tau(\bm{z})}}_2.
\end{equation}

\paragraph{Sufficiency ($\alpha\in[0,1]$).}
Fix any $\bm{z}_t$ and let $\bm{z}_{t+1}=f_\alpha(\bm{z}_t)=\alpha(\bm{z}_t-\bm{c}_{\tau(\bm{z}_t)})+\bm{c}_{\tau(\bm{z}_t)+1}$ as in eq. \ref{eq:self-consistent-one-param}. Since $\bm{c}_{\tau(\bm{z}_t)+1}\in\bm{C}$,
\begin{equation}
    d_{\bm{C}}(\bm{z}_{t+1})\le \norm{\bm{z}_{t+1}-\bm{c}_{\tau(\bm{z}_t)+1}}_2 = |\alpha|\,\norm{\bm{z}_t-\bm{c}_{\tau(\bm{z}_t)}}_2 = |\alpha|\,d_{\bm{C}}(\bm{z}_t).
\end{equation}
For $\alpha\in[0,1]$, this gives $d_{\bm{C}}(\bm{z}_{t+1})\le d_{\bm{C}}(\bm{z}_t)$. By induction, $d_{\bm{C}}(\bm{z}_t)\le d_{\bm{C}}(\bm{z}_0)$ for all $t\ge 0$, so
\begin{equation}
    \norm{\bm{z}_t}_2\le \max_{1\le s\le T_C-1}\norm{\bm{c}_s}_2+d_{\bm{C}}(\bm{z}_0)<\infty,
\end{equation}
and the orbit is bounded.

\paragraph{Non-necessity ($\alpha>1$ counterexample).}
Let $N=1$, $\alpha=2$, and $\bm{C}=(c_1,c_2,c_3)=(-1,1,-1)$. The induced Voronoi cells are $V_1=\{z\le 0\}$ and $V_2=\{z>0\}$, with successors $c_2=1$ and $c_3=-1$, respectively. Substituting into eq. \ref{eq:self-consistent-one-param} gives the piecewise-affine map
\begin{equation}
    f_2(z)=\begin{cases}
        2z+3, & z\le 0,\\
        2z-3, & z>0.
    \end{cases}
\end{equation}
A direct check shows $f_2([-3,3])\subseteq[-3,3]$: for $z\in[-3,0]$ we have $2z+3\in[-3,3]$, and for $z\in(0,3]$ we have $2z-3\in(-3,3]$. Hence any orbit with $z_0\in[-3,3]$ remains in $[-3,3]$ for all $t\ge 0$, proving bounded dynamics for $\alpha=2>1$. This map is conjugate to the doubling/sawtooth map $x_{t+1}=2x_t\bmod 1$, whose Lyapunov exponent equals $\log 2$ on its invariant set.
\end{proof}


\newpage

\end{document}